

\documentclass[master,english,final]{kaist-ucs}

\usepackage{amsmath,amssymb,amsfonts,amsthm}
\usepackage{algorithmic}
\usepackage{graphicx}
\usepackage{caption}
\usepackage{subcaption}
\usepackage{textcomp}
\usepackage{xcolor}
\usepackage{csquotes}

\usepackage{dsfont}
\usepackage{mathtools}
\usepackage{multirow}
\usepackage{color}
\usepackage{anyfontsize}
\usepackage{makecell}
\usepackage{longtable}
\usepackage{hyperref} 
\usepackage{url}
\usepackage{wrapfig,lipsum,booktabs}
\usepackage{mdframed}
\usepackage[linesnumbered,ruled,vlined]{algorithm2e}

\usepackage[utf8]{inputenc}
\usepackage[english]{babel}
\newtheorem{theorem}{Theorem}[section]

\newtheorem{lemma}[theorem]{Lemma}

\usepackage{threeparttable}
\usepackage{graphicx}
\usepackage[referable]{threeparttablex}
\usepackage[draft]{todonotes} 
\usepackage{accents}

\newcommand{\comment}[1]


\newmdenv[
  topline=false,
  bottomline=false,
  skipabove=\topsep,
  skipbelow=\topsep
]{siderules}

\usepackage{framed}
{\begin{leftbar}\begin{quotation}}%
{\end{quotation}\end{leftbar}}

\usepackage{caption} 
\captionsetup[table]{skip=10pt}



\title[korean] {연속 및 이산적 우선 순위를 갖는 VAE의 변이 상호 정보 최대화 프레임 워크}
\title[english]{Variational Mutual Information Maximization Framework for VAE Latent Codes  with Continuous and Discrete Priors}

%
%
%

%
%
%
\author[korean] {안드리}{세르 데가}
\author[korean2] {안드리}{세르 데가}
\author[chinese]{}{}
\author[english]{Serdega}{Andriy}

\advisor[major]{김대식}{Dae-Shik Kim}{signed}
\advisor[major2]{김대식}{Dae-Shik Kim}{signed}    
\advisorinfo{Professor of Electrical Engineering} 
%
%

%

%
\department{EE}{engineering}{a}

\studentid{20174574}

\referee[1]{Dae-Shik Kim}
\referee[2]{Sae-Young Chung}
\referee[3]{Hyunjoo Jenny Lee}

\approvaldate{2019}{06}{19}

\refereedate{2019}{06}{19}

\gradyear{2019}

\begin{document}
\newcommand{\KL}{\mathit{KL}}
\DeclarePairedDelimiterX{\KLdel}[2]{(}{)}{%
  #1\;\delimsize\|\;#2%
}

   \thesisinfo
    \begin{summary}      
    해석 가능하고 명확한 데이터 표현을 학습하는 것은 기계 학습 연구의 핵심 주제다. Variational Autoencoder(VAE)는 복잡한 데이터의 잠재 변수 모델을 학습할 수 있는 확장 가능한 모델이다. 이는 쉽게 최적화 할 수 있는 명확한 목적 함수를 사용한다. 하지만 이 목적 함수는 학습하는 표현의 품질을 명시적으로 측정하지 않으며, 이는 그 품질을 저하시킬 수 있다. 이 문제를 해결하기 위해 VAE에 대한 가변적 잠재 정보 최대화 프레임워크를 제안한다. 다른 방법들과 달리, 이 방법은 잠재 변수와 관측치 사이의 상호 정보에 대한 하한을 최대화하는 명시적 목적 함수를 제공한다. 이 목적 함수는 VAE가 잠재 변수를 무시하지 않도록 강제하는 정규 표현식의 역할을 하며, 관측치 관점에서 가장 유용한 정보를 주는 부분을 선택하도록 한다. 또한 제안된 프레임 워크는 고정 VAE 모델에 대한 잠재 코드와 관측치 간의 상호 정보를 평가할 수 있는 방법을 제공한다. 우리는 가우시안 및 결합 가우시안 및 이산 잠재 변수를 사용하여 VAE에 대한 실험을 수행했다. 이 결과는 제안된 접근법이 latent codes와 관측치 사이의 관계를 강화하고 학습된 표현을 향상시켜준다는 것을 보여준다.
    \end{summary}
   
    \begin{Korkeyword}
    변분 오토인코더, 변분 추론, 정보 이론, 유도된 잠재 변수 모델, 인공 신경망, 지속적 잠재 변수, 이산 잠재 변수, 가변적 잠재 상호 정보 최대화, 표현 학습
    \end{Korkeyword}

    \begin{abstract}
    Learning interpretable and disentangled representations of data is a key topic in machine learning research. Variational Autoencoder (VAE) is a scalable method for learning directed latent variable models of complex data. It employs a clear and interpretable objective that can be easily optimized. However, this objective does not provide an explicit measure for the quality of latent variable representations which may result in their poor quality.  We propose Variational Mutual Information Maximization Framework for VAE to address this issue. In comparison to other methods, it provides an explicit objective that maximizes lower bound on mutual information between latent codes and observations. The objective acts as a regularizer that forces VAE to not ignore the latent variable and allows one to select particular components of it to be most informative with respect to the observations. On top of that, the proposed framework provides a way to evaluate mutual information between latent codes and observations for a fixed VAE model. We have conducted our experiments on VAE models with Gaussian and joint Gaussian and discrete latent variables. Our results illustrate that the proposed approach strengthens relationships between latent codes and observations and improves learned representations.
    
    \noindent

    \end{abstract} 
     
    \begin{Engkeyword}
     Variational Autoencoder, Variational Inference, Information Theory, directed latent variable model, neural networks, continuous latent variable, discrete latent variable variable, variational mutual information maximization, representation learning.
    \end{Engkeyword}

    \addtocounter{pagemarker}{1}                 
    \newpage

    \tableofcontents

    \listoftables

    \listoffigures



\chapter{Introduction}

Finding a proper data representation can be a crucial part of a given machine learning approach. In many cases, when there is a need for inferring property from a sample, it is the main purpose of the method. For instance, classification task aims to discover such data representation that have a useful high-level interpretation for a human such as class. Unsupervised learning aims to find patterns in unlabeled data that can somehow help to describe it from a human viewpoint and/or perform a relevant task. Recent deep neural network approaches tackle this problem from a perspective of representation learning, where the goal is to learn a representation that captures some semantic properties of data. If learned representations of salient properties are interpretable and disentangled, it would improve generalization and make the downstream tasks robust and easier~\cite{lake2016building}. 

Over the last decade, generative models have become popular in unsupervised learning research. The intuition is that by generative modeling it may be possible to discover latent representations and their relation to observations. The two most popular training frameworks for such models are Generative Adversarial Networks (GAN) \cite{goodfellow2014generative} and Variational Autoencoders \cite{kingma2013auto, rezende2014stochastic}. The latter is a powerful method for unsupervised learning of directed probabilistic latent variable models. Training a model within Variational Autoencoder (VAE) framework allows performing both tasks of inference and generation. This models are trained by maximizing evidence lower bound (ELBO) which is a clear objective and results in stable training in comparison to GAN. However, the latent variable in ELBO is marginalized and the resulting objective depends only on $p_{\boldsymbol{\theta}}(\mathbf{x})$. Therefore, it does not assess the ability of the model to do inference and the quality of latent code \cite{huszar2017maximum,alemi2018fixing}. Thus, having a high ELBO does not necessarily mean that useful latent representations were learned. Moreover, powerful enough decoder can ignore the conditioning on latent code resulting in $p_{\boldsymbol{\theta}}(\mathbf{x}|\mathbf{z})=p_{\boldsymbol{\theta}}(\mathbf{x})$ \cite{bowman2016generating,chen2016variational}. On top of that, even when a strong assumption about underlying generative process is incorporated into the latent variable, the model trained with ELBO objective may not assign any interpretable representation to it as we will show in our experimental results.

Despite the lack of influence on latent codes in ELBO, it is possible for VAE to learn interpretable and disentangled representations as it was shown by the auhtors. In the setting of VAE, approximation to the marginal log-likelihood is maximized that may recover true generative process with interpretable and useful latent representations. This approach may lead to feasible results when there are strong constraints on joint distribution. It also relies on expressiveness of the selected model and parameter initialization. Learned latent representations can be also improved by employing weighting and constraining coefficients for VAE objective terms or varying them~\cite{higgins2017beta,kim2018disentangling,chen2018isolating}. However, it is not clear which part of the code will actually learn and control the main interpretable underlying variation factors. To discover it, one should make strong assumption about underlying generative process before training and, after training, manually tweak each component of the code and check how it is related to produced data samples.

The key idea of our approach is to maximize mutual information (MI) between samples from posterior distribution (represented by encoder) and observations. Unfortunately, exact MI computing is hard and may be intractable. To overcome this, our framework employs Variational Information Maximization \cite{barber2003algorithm} to obtain lower bound on true MI. This technique relies on approximation by auxiliary distribution and we represent it by the additional inference network. The obtained lower bound on MI is used as the regularizer to the original VAE objective to force the latent representations to have strong relationship with observations and prevent the model from ignoring them. 

We have conducted our experiments on VAE models with such latent distributions: Gaussian, and joint Gaussian and discrete. We compare qualitatively and quantitatively the models trained using pure ELBO objective and with introduced MI regularizer.

\chapter{Theoretical Background}
\section{Probabilistic View of The World \label{sec:prob_view}}
Probability theory is a mathematical framework that allows one to quantify uncertainty of some phenomena or event as well as provides foundation for analysis of hypothesis and uncertain statements. Probability theory is a main tool for quantifying our uncertainty about the world and it allows us to still make decisions when being totally certain is impossible. The world around is stochastic and unpredictable from a human viewpoint as well as from Quantum mechanics perspective. It would be fair to state that, in our world we have to reason and make decisions while uncertainty is always present. Back in time, Pierre-Simon Laplace stated: 
\begin{displayquote}
We may regard the present state of the universe as the effect of its past and the cause of its future. An intellect which at a certain moment would know all forces that set nature in motion, and all positions of all items of which nature is composed, if this intellect were also vast enough to submit these data to analysis, it would embrace in a single formula the movements of the greatest bodies of the universe and those of the tiniest atom; for such an intellect nothing would be uncertain and the future just like the past would be present before its eyes.
\end{displayquote}
By this, Laplace argues that given the whole knowledge, the demon could correctly predict anything. Thus, from general standpoint, there is nothing stochastic in our world and just lack of observations and knowledge is the cause uncertainty. It had given a rise of causal scientific determinism. However, with the raise of Quantum mechanics it was proved that determinism (on quantum interaction level) is incompatible with with the Copenhagen interpretation, which stipulates indeterminacy. With regard to the aforementioned statements and according to \cite{Goodfellow-et-al-2016} we can argue that there are exist two sources of uncertainty:
\begin{enumerate}
  \item Unavoidable stochasticity present in the observed real-world phenomena that are stochastic by their nature. In Quantum mechanics, the dynamics of particles can be modelled only in probabilistic way.
  \item Lack of knowledge and observations. The systems that are deterministic by nature may be stochastic when there are no possibility to gain knowledge about all underlying factors that affect it.
\end{enumerate}
 
The first source can be also incorporated into objectivist view on randomness. It tracts randomness as a fundamental property of the world. On the other hand, second source align with subjectivist perspective. It tracts randomness of event as a lack of knowledge about it. It results in uncertainty degree that is represented by probabilities. In machine learning and artificial intelligence, the subjectivist view is the only reasonable point to tackle particular problems. In this setting, probability theory laws provide an intuition for design of machine leaning algorithms and systems. From the general viewpoint, the main goal of conventional machine learning model is to perform inference. It can be formulated as an estimation of one quantity using knowledge about the other one while they are statistically related. On top of that, probability theory and statistics provide tools for theoretical analysis of various machine learning systems.
\section{Probabilistic Modelling\label{sec:prob_modeling}}
\subsection{Intuition}
The very general form to define a mathematical model for inference for the real-world problem is the equation of the form
\begin{equation} \label{eq:simple_model}
    \mathbf{y}=\boldsymbol{\mathbf{W}}\mathbf{x},
\end{equation}
where $\mathbf{y}$ is some variable that we want to infer with known value of variable $\mathbf{x}$ that is somehow related to $\mathbf{y}$, and $\boldsymbol{\mathbf{W}}$ defines some arbitrary transformation. As we mentioned in previous section, different phenomena in the real world are rarely deterministic and involve significant stochasticity. Thus, it is natural to fully model a particular real world phenomena in a from of probability distribution
\begin{equation} \label{eq:world_probab}
    p(\mathbf{x},\mathbf{y}).
\end{equation}

Probability can be seen as the fuzzy form of logic that is applicable in cases with uncertainty. Logic is a set of rules for determining whether an argument true or false having set of prior arguments. The connection is that probability theory formalizes how to determine the likelihood of an argument  given the likelihood of prior arguments.
\subsection{Probabilistic Models}
Probabilistic models are used to describe systems with the use of probability distributions and random variables. Machine learning employs probabilistic models to recover true data generating distribution having set of samples $\mathcal{D}=\{\mathbf{x}^{(1)}, \mathbf{x}^{(2)}, \cdots, \mathbf{x}^{(m)}\}$ (where $m\geq1$) drawn from true distribution $p_{data}(\mathbf{x})$. We wish to approximate the true distribution using a model $p_{\boldsymbol{\theta}}$ that represents some parametric family of distributions with parameters $\boldsymbol{\theta}$, such that $p_{\boldsymbol{\theta^*}}(\mathbf{x}) \approx p_{data}(\mathbf{x})$. The most common approach to tackle this problem in modern machine learning and deep learning is Maximum Likelihood Estimation which results in optimal point estimate of $\boldsymbol{\theta}$ using estimator
\begin{equation} \label{eq:general_mle}
    \boldsymbol{\theta^*}=\mathop{\mathrm{arg\,max}}_{\boldsymbol{\theta}} p_{\boldsymbol{\theta}}(\mathcal{D}).
\end{equation}
If the samples $\mathcal{D}$ from the true data distribution are i.i.d and using property of $\log$, equation ~\ref{eq:general_mle} can be rewritten as 
\begin{equation} \label{eq:final_mle}
    \boldsymbol{\theta^*}=\mathop{\mathrm{arg\,max}}_{\boldsymbol{\theta}}\sum_{i = 1}^{m}[\log p_{\boldsymbol{\theta}}(\mathbf{x^{(i)}})]=\mathop{\mathrm{arg\,max}}_{\boldsymbol{\theta}}\mathbb{E}_{\mathbf{x}\sim\mathcal{D}}[\log p_{\boldsymbol{\theta}}(\mathbf{x})].
\end{equation}
Actually, maximum likelihood estimation can be seen as minimizing KL divergence between empirical distribution $\hat{p}_{data}$ (represented by $\mathcal{D}$) of true data distribution $p_{data}(\mathbf{x})$ and the distribution represented by the model from chosen parametric family $p_{\boldsymbol{\theta}}$. KL divergence is a measure of how one probability distribution is different from another probability distribution. It has a perfect sense in our setting and then KL divergence can be defined as
\begin{equation} \label{eq:kl_mle}
    D_{\KL}\KLdel{\hat{p}_{data}}{p_{\boldsymbol{\theta}}}=\mathbb{E}_{\mathbf{x}\sim\hat{p}_{data}}[\log\hat{p}_{data}(\mathbf{x}) - \log p_{\boldsymbol{\theta}}(\mathbf{x})].
\end{equation}
Since the first term is constant, to minimize the KL divergence we need to minimize the negative second term which is the same as maximization setting in equation~\ref{eq:final_mle}.

In the modern machine learning, this parametric distribution families are often represented by artificial neural networks (NN). I has been proven that this kind of models even with a single hidden layer with a finite number of units is capable of approximating any continuous function on compact subsets of real numbers \cite{cybenko1989approximation}. Thus, the problem formulated in equation~\ref{eq:final_mle} can be seen as an optimization problem with respect to the parameters $\boldsymbol{\theta}$ which are the parameters of some NN. 
Optimization for NN when the are no closed form solution is usually done with gradient descent using gradients obtained by backpropagation algorithm \cite{rumelhart1985learning}. 

In practice, for big datasets and models the optimization is done using stochastic gradient descent (SGD) by randomly drawing the minibatches of data $\mathcal{M} \subset \mathcal{D}$ and obtaining unbiased gradient estimator for likelihood maximization:
\begin{align}
\frac{1}{|\mathcal{D}|} \nabla_{\boldsymbol{\theta}} \textrm{ log }p_{\boldsymbol{\theta}}(\mathcal{D}) = \mathbb{E}_{\mathcal{M} \sim \mathcal{D}} \bigg[ \frac{1}{|\mathcal{M}|} \underset{\mathbf{x}\in \mathcal{M}}{\sum} \nabla_{\boldsymbol{\theta}} \textrm{ log }p_{\boldsymbol{\theta}}(\mathbf{x}) \bigg].
\label{eq:ml_grad}
\end{align}
\subsection{Discriminative and Generative Models}
\textbf{Discriminative models}. A discriminative model is a model that defines conditional probability distribution $p(\mathbf{y}|\mathbf{x})$ for the target variable $\mathbf{y}$, given a known observation $\mathbf{x}$. When this kind models is applied to prediction tasks, we can formulate this as problem of unknown variable $\mathbf{y}$ estimation using observation $\mathbf{x}$. In this setting, maximum likelihood estimation can be extended to estimate parameters $\boldsymbol{\theta}$ of some distribution $p_{\boldsymbol{\theta}}(\mathbf{y}|\mathbf{x})$ that we want to use for an approximation of real conditional distribution  $p_{true}(\mathbf{y}|\mathbf{x})$ to do inference or predictions. In that setting, given a dataset comprised of data points and respective predictions (labels) $\mathcal{D}=\{(\mathbf{x}^{(1)},\mathbf{y}^{(1)}), (\mathbf{x}^{(2)},\mathbf{y}^{(2)}), \cdots, (\mathbf{x}^{(m)},\mathbf{y}^{(m)})\}$ (where $m\geq1$), the conditional maximum likelihood estimation has form
\begin{equation} \label{eq:cond_mle}
    \boldsymbol{\theta^*}=\mathop{\mathrm{arg\,max}}_{\boldsymbol{\theta}}\sum_{i = 1}^{m}[\log p_{\boldsymbol{\theta}}(\mathbf{y^{(i)}}|\mathbf{x^{(i)}})]=\mathop{\mathrm{arg\,max}}_{\boldsymbol{\theta}}\mathbb{E}_{(\mathbf{x},\mathbf{y})\sim\mathcal{D}}[\log p_{\boldsymbol{\theta}}(\mathbf{y}|\mathbf{x})].
\end{equation}

\textbf{Generative models}. This kind of models aims to find all factors that explain target phenomena or data and define distribution $p(\mathbf{x},\mathbf{y})$. From the previous section viewpoint, we can see it as a probabilistic model that simultaneously models distribution of observations and predictions. This allows one to obtain discriminative distribution $p(\mathbf{y}|\mathbf{x})$ as well as generative one $p(\mathbf{x}|\mathbf{y})$ and individual distributions of $p(\mathbf{x})$ and $p(\mathbf{y})$ by marginalization and Bayes rule.
\section{Directed Probabilistic Models\label{sec:graph_models}}
\subsection{Formal definition}
Machine learning and Deep Learning approaches face probability distributions over a various number of random variables. These distributions may involve direct interactions between relatively few variables in comparison to their total number. Using a single function to describe the entire joint probability distribution over all random variables may be inefficient and does not provide a way to incorporate knowledge about direct dependencies between this variables into a model.

Directed graphical models or Bayesian networks is a family of probability distributions that provide compact parametrization that can be naturally described using a directed graph. Bayesian networks represent probability distributions that can be formed via products of smaller conditional probability distributions (one for each variable). By having probability of this form, we are introducing assumptions that certain variables are independent and others are dependent into our model. The parametrization idea is next, having joint probability of all present random variables, we can decompose it using chain rule as
\begin{equation}\label{eq:chain}
    p(\mathbf{x}_{1}, \mathbf{x}_{2}, \ldots, \mathbf{x}_{n})=p(\mathbf{x}_{1})p(\mathbf{x}_{2}|\mathbf{x}_{1})\cdots p(\mathbf{x}_{n}|\mathbf{x}_{n-1}, \ldots, \mathbf{x}_{2}, \mathbf{x}_{1})
\end{equation}

In a Bayesian network model representation, the components of factorized right-hand side of the Eq.\ref{eq:chain} is simplified when prior assumption about independence of certain variables are incorporated into the model. Then, each factor for $\mathbf{x}_{i}$ depends only on a particular set of ancestor variables $\mathbf{x}_{A_{i}}$ resulting in 
\begin{equation}\label{eq:factor_simple}
    p(\mathbf{x}_{i}|\mathbf{x}_{i-1}, \ldots, \mathbf{x}_{1})=p(\mathbf{x}_{i}|\mathbf{x}_{A_{i}}).
\end{equation}
In this setting, we can reformulate the initial joint distribution as \begin{equation}\label{eq:chain_bayesian}
    p(\mathbf{x}_{1}, \mathbf{x}_{2}, \ldots, \mathbf{x}_{n})=\prod_{i=1}^n p(\mathbf{x}_{i}|\mathbf{x}_{A_{i}})
\end{equation}

Distributions and models of this form can be naturally expressed as directed acyclic graphs (DAG), where nodes represent random variables $\mathbf{x}_{i}$ and edges represent their relationships. For instance, consider a probabilistic directed model over random variables $\mathbf{a},\mathbf{b},\mathbf{c},\mathbf{d},\mathbf{e}$ which is represented by figure~\ref{fig:basic_dag}. This graphical model enables one directly incorporate relationships between random variables into probabilistic model of the distribution. As you can see,  $\mathbf{a}$ and $\mathbf{c}$ (as well as $\mathbf{a}$ and $\mathbf{b}$) interact directly, but $\mathbf{a}$ and $\mathbf{e}$ interact only indirectly through $\mathbf{c}$ forming markov chain.
\begin{figure}[h]
\centering
\includegraphics[width=0.25\linewidth]{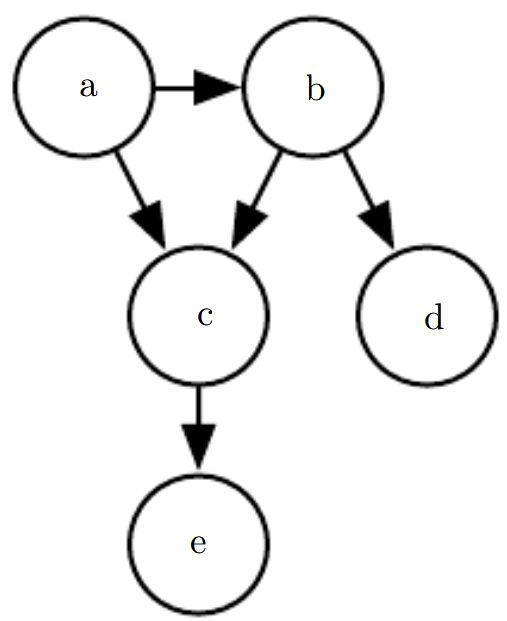}
\caption{A directed graphical model over random variables a,b,c,d, and e (from~\cite{Goodfellow-et-al-2016}).}
\label{fig:basic_dag}
\end{figure}
Such model can be represented by joint probability distribution that is factorized as
\begin{equation}\label{eq:basic_dag_distr}
    p(\mathbf{a},\mathbf{b},\mathbf{c},\mathbf{d},\mathbf{e})=p(\mathbf{a})p(\mathbf{b}|\mathbf{a})p(\mathbf{d}|\mathbf{b})p(\mathbf{e}|\mathbf{c})p(\mathbf{c}|\mathbf{a},\mathbf{b})
\end{equation}

\textbf{Formal definition}. Directed probabilistic model is a directed graph $\mathbf{G}$=($\mathbf{V}$,$\mathbf{E}$) where each random variable $\mathbf{x}_{i}$ corresponds to node \textit{i} $\in$ $\mathbf{V}$ and $\mathbf{E}$ represents relationships between this variables. One conditional probability distribution per node $p(\mathbf{x}_{i}|\mathbf{x}_{A_{i}})$ from factorized joint distribution over all variables. It specifies probability distribution of $\mathbf{x}_{i}$ with respect to its ancestors given values $\mathbf{x}_{A_{i}}$. The advantage of structured probabilistic modelling is that it reduces the modelling cost of probability distributions to explain some particular phenomena. Also, it makes it is easier to perform learning and inference. The main advantage is possibility to avoid modeling particular dependencies with use of prior assumptions. These models represent dependencies as edges. Leaving any particular edge out, the model incorporates an assumption that some random variables do not require direct interaction.

\subsection{Dependencies in Directed Probabilistic Models} Independence assumptions made when modelling Bayesian network are important since they define key properties of the particular model. They represent prior understanding of the phenomena that we are trying to model and this information may provide useful insights for performing inference. There are three key dependence structures for modeling probabilistic models with three random variables (nodes). Let Bayes net represent three random variables $\mathbf{A}$, $\mathbf{B}$, and $\mathbf{C}$. Then these three dependence structures (illustrated in figure~\ref{fig:dependency_graphs}) can be formulated as follows
\begin{itemize}
  \item \textit{Cascade}. In that case directed graphical model represent dependencies as $\mathbf{A}$ $\rightarrow$ $\mathbf{B}$ $\rightarrow$ $\mathbf{C}$. Thus, if variable $\mathbf{B}$ is observed then  $\mathbf{A}$ and $\mathbf{C}$ are conditionally independent. This is so since in that setting, the variable B contains everything that define outcome $\mathbf{C}$.
  \item \textit{Common parent}. When Bayesian model has form of $\mathbf{A}$ $\leftarrow$ $\mathbf{B}$ $\rightarrow$ $\mathbf{C}$ and $\mathbf{B}$ is observed then $\mathbf{A}$ and $\mathbf{C}$ are conditionally independent again. In that case, this is so since $\mathbf{B}$ completely defines outcome $\mathbf{A}$ and $\mathbf{B}$ independently.
  \item \textit{V-structure}. In that case the model has form of $\mathbf{A}$ $\rightarrow$ $\mathbf{C}$ $\leftarrow$ $\mathbf{B}$. It means that knowledge of outcome $\mathbf{C}$ couples $\mathbf{A}$ and $\mathbf{B}$. Otherwise, $\mathbf{A}$ and $\mathbf{B}$ are independent.
\end{itemize}
\begin{figure}[h]
\centering
\includegraphics[width=0.6\linewidth]{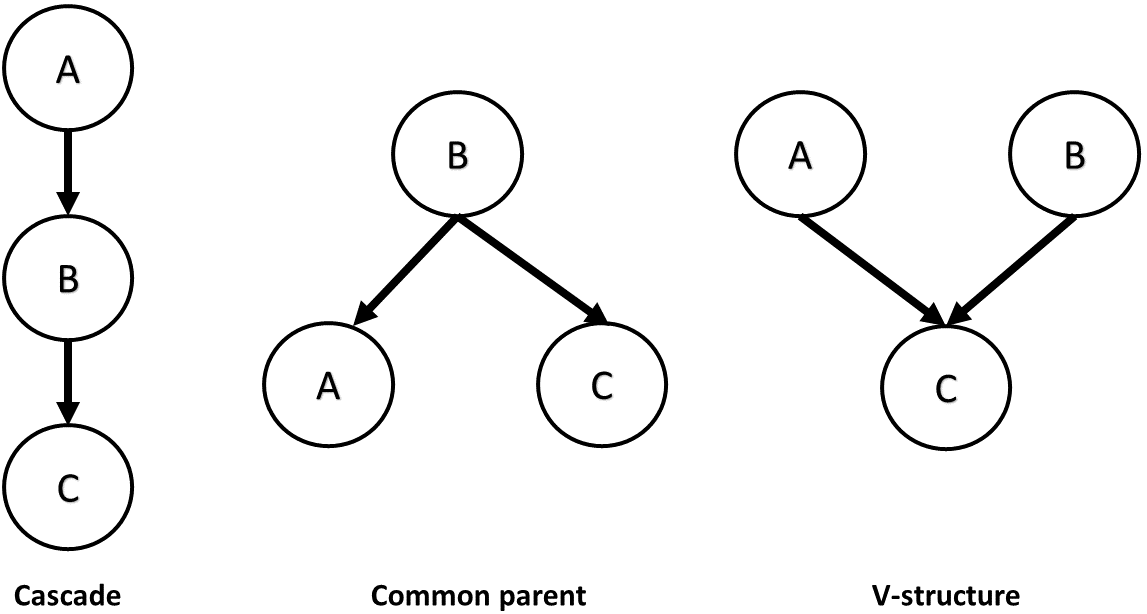}
\caption{Dependency structures in probabilistic models of three random variables}
\label{fig:dependency_graphs}
\end{figure}
\section{Latent Variable Models\label{sec:latent_variable_mods}}
\subsection{General Setting}
Latent variable models (LVM) is powerful approach to probabilistic modelling that deals with a set of observed variables with latent, or hidden, variables. This modelling approach is extremely useful in cases when some data might be naturally unobserved. When a joint distribution over observable and latent variables is defined, the distribution of the observations can be obtained by marginalization of the latent variable. This approach makes it possible to model complex data distributions and express them in terms of more tractable joint distributions. The common example of a latent variable model is the Gaussian mixture distribution where the latent variable is mixture component label. The structure of such probabilistic models can be represented in a graphical manner. For instance, as a directed acyclic graph (DAG), or Bayesian network. 

In general, latent variable model is a model that defines joint probability distribution over two random variables $\mathbf{x}$ and $\mathbf{z}$
\begin{equation}\label{eq:general_lvm}
    p(\mathbf{x},\mathbf{z}),
\end{equation}
where $\mathbf{x}$ are observable variable (represented by data points during training) and $\mathbf{z}$ that are unobserved. Latent	variables may appear naturally, from the	structure of the problem, because	something	wasn’t	measured,	because	of faulty sensors, occlusion, privacy. These models can be represented as directed or undirected. There exist both discriminative and generative LVM and those which combine them as Variational Autoencoder \cite{kingma2013auto, rezende2014stochastic}.
The goal of LVM is to express distribution $p(\mathbf{x})$ of the observable variables using smaller number of latent variables~\cite{jordan1998learning}. It is done by first decomposing the joint distribution $p(\mathbf{x},\mathbf{z})$ into posterior distribution $p(\mathbf{x}|\mathbf{z})$ and prior $p(\mathbf{z})$ resulting in
\begin{equation}\label{eq:lvm_decomposition}
    p(\mathbf{x},\mathbf{z})=p(\mathbf{x}|\mathbf{z})p(\mathbf{z}).
\end{equation}
Then, the distribution $p(\mathbf{x})$ is obtained by marginalization of latent variable
\begin{equation}\label{eq:lvm_decomposition}
    p(\mathbf{x})=\int p(\mathbf{x}|\mathbf{z})p(\mathbf{z}) d\mathbf{z}
\end{equation}
to which can be appllied such estimation techniques as maximum likelihood estimation (eq.~\ref{eq:final_mle}). The integration procedure in general can be intractable, as in case of deep neural networks, except cases when the distributions $p(\mathbf{x}|\mathbf{z})$ and $p(\mathbf{z})$ have specific form and closed form solution is achievable.
\subsection{Gaussian Mixture Models}
Gaussian mixture models (GMMs) is a kind of widely used latent variable models in machine learning. In this setting, each data point is a tuple $(\mathbf{x}_i,\mathbf{z}_i)$ where $\mathbf{x}_i$$\in$$\mathbb{R}^d$ and $\mathbf{z}_i$ is a discrete random variable of K categories. The joint distribution $p(\mathbf{x},\mathbf{z})=p(\mathbf{x}|\mathbf{z})p(\mathbf{z})$ is a directed model (see fig~\ref{fig:gmm_graph}), where $p(\mathbf{z}=k)=\pi_k$ for some vector of class probabilities $\pi\in\Delta_{K-1}$ and then
\begin{equation}\label{eq:mixture_xz}
    p(\mathbf{x}|\mathbf{z}=k)=\mathcal{N}(\mathbf{x};\mu_k,\Sigma_k)
\end{equation}
which is multivariate Gaussian distribution parametrized by $\mu_k$ and $\Sigma_k$. GMM model defines observed data as points that come from K Gaussian clusters with proportions represented by $\pi_1,\ldots,\pi_{K}$. The resulting observed data point probability has form of
\begin{equation}\label{eq:mixture_observ_prob}
    p(\mathbf{x})=\sum_{n=1}^{K}p(\mathbf{x}|\mathbf{z}=n)p(\mathbf{z}=n)=\sum_{n=1}^{K}\pi_n\mathcal{N}(\mathbf{x};\mu_n,\Sigma_n)
\end{equation}
If we wish to sample new data point from that distribution, first we sample a category k and then sample a point from Gaussian distribution with respective parameters $\mu_k$ and $\Sigma_k$.
\begin{figure}[h]
\centering
\includegraphics[width=0.15\linewidth]{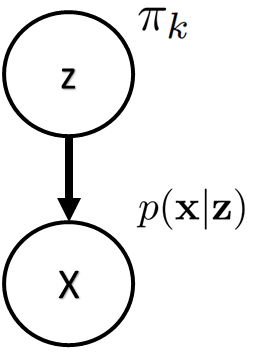}
\caption{Graphical directed model representation of GMM}
\label{fig:gmm_graph}
\end{figure}
\subsection{Fitting Latent Variable Models. EM algorithm}
Expectation Maximization (EM) algorithm is widely used algorithm for fitting directed LVM models which aim is the same as in the case of maximum likelihood estimation - maximize data likelihood (see section 2.4.1)  $p_{\boldsymbol{\theta}}(\mathbf{x})$ for some model $p_{\boldsymbol{\theta}}(\mathbf{x},\mathbf{z})$ of particular parametric family with parameters $\boldsymbol{\theta}$. Having collected observable data points $\mathcal{D}=\{\mathbf{x}^{(1)}, \mathbf{x}^{(2)}, \cdots, \mathbf{x}^{(m)}\}$ (where $m\geq1$), we wish to maximize marginal log-likelihood of the data
\begin{equation} \label{eq:marginal_logli}
   \log[p_{\boldsymbol{\theta}}(\mathcal{D})]=\mathbb{E}_{\mathbf{x}\sim\mathcal{D}}[\log p_{\boldsymbol{\theta}}(\mathbf{x})]=\mathbb{E}_{\mathbf{x}\sim\mathcal{D}}[\mathbb{E}_{\mathbf{z}\sim p_{\boldsymbol{\theta}}(\mathbf{z}|\mathbf{x})}[\log p_{\boldsymbol{\theta}}(\mathbf{x},\mathbf{z})]].
\end{equation}
This optimization objective is more complex than log-likelihood maximization in maximum likelihood estimation setting, even for directed graphical models. It is not possible to derive a closed form solution for the parameters even when the distributions represent some conventional distribution families.

On the other hand, we can employ iterative EM algorithm that may converge to optimal solution. This algorithm relies on assumption that it is possible to obtain posterior distribution $p_{\boldsymbol{\theta}}(\mathbf{z}|\mathbf{x})$. There are cases when it is not possible and there arise variational inference methods that will be covered later. The EM algorithm relies on two steps, first we start with some arbitrary parameters $\boldsymbol{\theta}_0$ and then repeat the following steps until convergence with t=0,1,2...
\begin{itemize}
  \item \textit{E-step}. For each $\mathbf{x}$ $\in$ $\mathcal{D}$, compute posterior $p_{\boldsymbol{\theta_t}}(\mathbf{z}|\mathbf{x})$.
  \item \textit{M-step}. Obtain new weights by
  \begin{equation} \label{eq:mle_marginal_logli}
      \boldsymbol{\theta_{t+1}}=\mathop{\mathrm{arg\,max}}_{\boldsymbol{\theta}}\mathbb{E}_{\mathbf{x}\sim\mathcal{D}}[\mathbb{E}_{\mathbf{z}\sim p_{\boldsymbol{\theta}_{t}}(\mathbf{z}|\mathbf{x})}[\log p_{\boldsymbol{\theta}_{t}}(\mathbf{x},\mathbf{z})]]=\mathop{\mathrm{arg\,max}}_{\boldsymbol{\theta}}\mathbb{E}_{\mathbf{x}\sim\mathcal{D}}[\log p_{\boldsymbol{\theta}_{t}}(\mathbf{x})].
  \end{equation}
\end{itemize}

Since the optimization procedure is non-convex, EM is not guaranteed to reach global optimum solution. However, EM in practical application converges to a feasible local optimum solution. The performance of EM algorithm depends on the initial choice of parameters $\boldsymbol{\theta}_0$. Different initialization may lead to different local optima. Techniques for selecting parameters for particular probabilistic model to which EM is applicable is an active separate direction in machine learning research.

\subsection{Variational Inference}
\textbf{Inference as optimization} In this setting, consider LVM again with observed variables $\mathbf{x}$ and latent variables $\mathbf{z}$. For fitting the particular model we wish to obtain $\log p_{\boldsymbol{\theta}}(\mathbf{x})$ for the observed data. There are cases when it is costly and intractable to marginalize the latent variable $\mathbf{z}$. Instead, it is convenient to maximize variational lower bound or evidence lower bound (ELBO). The other used term for this objective is variational free energy proposed by Richard Feynman in 1972. Evidence lower bound can be defined as
\begin{equation} \label{eq:free_energy}
\mathcal{L}(\mathbf{z},\boldsymbol{\theta},q)=\log p_{\boldsymbol{\theta}}(\mathbf{x}) - D_{\KL}\KLdel{q(\mathbf{z})}{p_{\boldsymbol{\theta}}(\mathbf{z}|\mathbf{x})}
\end{equation}
where q is an arbitrary distribution. We can derive it as follows
\begin{align}
    \textrm{log }p_{\boldsymbol{\theta}}(\mathbf{x}) &= \mathbb{E}_{q(\mathbf{z})} \Big[ \textrm{log }p_{\boldsymbol{\theta}}(\mathbf{x}) \Big] \\
    &= \mathbb{E}_{q(\mathbf{z})} \bigg[ \textrm{log} \left\{ \frac{p_{\boldsymbol{\theta}}(\mathbf{x}, \mathbf{z})}{q(\mathbf{z})} \frac{q( \mathbf{z})}{p_{\boldsymbol{\theta}}(\mathbf{z}| \mathbf{x})} \right\} \bigg] \\
    &= \mathbb{E}_{q(\mathbf{z})} \bigg[ \textrm{log} \left\{ \frac{p_{\boldsymbol{\theta}}(\mathbf{x}, \mathbf{z})}{q(\mathbf{z})}  \right\}\bigg] + 
    \mathbb{E}_{q(\mathbf{z})} \bigg[ \textrm{log} \left\{ \frac{q( \mathbf{z})}{p_{\boldsymbol{\theta}}(\mathbf{z}| \mathbf{x})} \right\} \bigg] \\
    &= \mathcal{L}(\mathbf{z},\boldsymbol{\theta},q) + D_{\KL}\KLdel{q(\mathbf{z})}{p_{\boldsymbol{\theta}}(\mathbf{z}|\mathbf{x})}.
\end{align}
Since the KL divergence term is non-negative, ELBO can be seen as lower bound on true data log likelihood, meaning
\begin{equation} \label{eq:elbo_le}
\log p_{\boldsymbol{\theta}}(\mathbf{x}) \geq
\mathcal{L}(\mathbf{z},\boldsymbol{\theta},q).
\end{equation}
We can interpret it as an approximation to the true data likelihood and it is tight when 
\begin{equation} \label{eq:KL_zero}
D_{\KL}\KLdel{q(\mathbf{z})}{p_{\boldsymbol{\theta}}(\mathbf{z}|\mathbf{x})}=0 \iff q(\mathbf{z})=p_{\boldsymbol{\theta}}(\mathbf{z}|\mathbf{x}),
\end{equation}
and then
\begin{equation} \label{eq:elbo_eq}
\log p_{\boldsymbol{\theta}}(\mathbf{x}) =
\mathcal{L}(\mathbf{z},\boldsymbol{\theta},q).
\end{equation}

With respect to aforementioned theoretic insights, it is valid to interpret inference as the procedure optimizing $\mathcal{L}(\mathbf{z},\boldsymbol{\theta},q)$ with respect to q. Exact inference can match $\mathcal{L}(\mathbf{z},\boldsymbol{\theta},q)$ to $\log p_{\boldsymbol{\theta}}(\mathbf{x})$ if q represents a set of distributions that includes $p_{\boldsymbol{\theta}}(\mathbf{z}|\mathbf{x})$. Moreover, in some cases when this distributions have particular form, this optimization can be done in a closed form.
\subsection{EM as ELBO Maximization}
\textbf{Variational lower bound}. Function $g(\xi,\mathbf{y})$ is called variational lower bound for $f(\mathbf{y})$ iff for all $\xi$ and for all $\mathbf{y}$ it follows $f(\mathbf{y})$ $\geq$ $g(\xi,\mathbf{y})$ and for any $\mathbf{y}_{0}$ there exists $\xi(\mathbf{y}_{0})$ such that $f(\mathbf{y}_{0})$ = $g(\xi(\mathbf{y}_{0}),\mathbf{y}_{0})$.
If it is possible to find such variational lower bound then, instead of solving
\begin{equation} \label{eq:var_lb_max}
    \mathbf{y}^{*}=\mathop{\mathrm{arg\,max}}_{\mathbf{y}} f(\mathbf{y}),
\end{equation}
it is possible to perform block-coordinate updates of $g(\xi(\mathbf{y}),\mathbf{y})$, meaning
\begin{equation} \label{eq:block_coord_upd}
    \mathbf{y}_{t}=\mathop{\mathrm{arg\,max}}_{\mathbf{y}} g(\xi_{t-1},\mathbf{y}) \ ,\  \xi_{t}=\xi(\mathbf{y}_{t})=\mathop{\mathrm{arg\,max}}_{\xi} g(\xi,\mathbf{y}_{t}) 
\end{equation}

In the setting of latent variable models, we wish to maximize marginal data log-likelihood $\log p_{\boldsymbol{\theta}}(\mathbf{x})$. In the Section 2.4.3, we described a common method for fitting this kind of models - expectation maximization algorithm. This algorithm can be also described from the perspective of maximization of variational lower bound on marginal data log-likelihood (ELBO). From (2.23) we know that
\begin{equation} \label{eq:em_elbo_1}
\log p_{\boldsymbol{\theta}}(\mathbf{x}) =
\mathcal{L}(\mathbf{z},\boldsymbol{\theta},q) + D_{\KL}\KLdel{q(\mathbf{z})}{p_{\boldsymbol{\theta}}(\mathbf{z}|\mathbf{x})} \ and \ \log p_{\boldsymbol{\theta}}(\mathbf{x}) \geq
\mathcal{L}(\mathbf{z},\boldsymbol{\theta},q).
\end{equation}
Then, to maximize $\log p_{\boldsymbol{\theta}}(\mathbf{x})$ we can maximize ELBO when it is tight. Thus, we can reformulate EM algorithm as follows: starting with $\boldsymbol{\theta}_0$ and then repeating the following steps until convergence with t=0,1,2...
\begin{itemize}
  \item \textit{E-step}. For each $\mathbf{x}$ $\in$ $\mathcal{D}$ compute posterior which is equivalent to making ELBO tight by finding q that maximizes it:
  \begin{equation} \label{eq:em_elbo_e_step}
      q(\mathbf{z})=\mathop{\mathrm{arg\,max}}_{q}\mathcal{L}(\mathbf{z},\boldsymbol{\theta},q)=\mathop{\mathrm{arg\,min}}_{q}D_{\KL}\KLdel{q(\mathbf{z})}{p_{\boldsymbol{\theta}_{t}}(\mathbf{z}|\mathbf{x})}=p_{\boldsymbol{\theta}_{t}}(\mathbf{z}|\mathbf{x})
  \end{equation}
  \item \textit{M-step}. Having ELBO tight to true data log-likelihood, we can find the parameters $\boldsymbol{\theta_{t+1}}$ that maximize it
  \begin{equation} \label{eq:em_elbo_m_step}
      \boldsymbol{\theta_{t+1}}=\mathop{\mathrm{arg\,max}}_{\boldsymbol{\theta}}\mathcal{L}(\mathbf{z},\boldsymbol{\theta},q)=\mathop{\mathrm{arg\,max}}_{\boldsymbol{\theta}}\log p_{\boldsymbol{\theta}_{t}}(\mathbf{x})=\mathop{\mathrm{arg\,max}}_{\boldsymbol{\theta}}\mathbb{E}_{\mathbf{z}}[\log p_{\boldsymbol{\theta}_{t}}(\mathbf{x},\mathbf{z})].
  \end{equation}
\end{itemize}
This process monotonically increases the lower bound and converges to stationary point of $\log p_{\boldsymbol{\theta}^*}(\mathbf{x})$. If the true posterior $p_{\boldsymbol{\theta}}(\mathbf{z}|\mathbf{x})$ is intractable to compute, then we may do search for the closest $q(\mathbf{z})$ among tractable distributions by solving optimization problem or represent this distribution by the neural network as in ~\cite{kingma2013auto,rezende2014stochastic}. In that case, $q(\mathbf{z})$ becomes variational approximation to $p_{\boldsymbol{\theta}}(\mathbf{z}|\mathbf{x})$.
\chapter{Variational Autoencoders}
\section{Variational Autoencoder\label{sec:vae_description}}
\subsection{Problem Setting}
We wish to be able to perform efficient approximate inference and learning in probabilistic directed latent variable models with continuous latent variables and/or parameters that have intractable posterior distributions and big datasets. In this case, the conventional EM algorithm will not work since the E step requires the posterior distribution to be known and tractable. If it is not, then marginal likelihood is also intractable. The mean-field Variational Bayes approach requires closed form solutions to the approximate posterior $p_{\boldsymbol{\theta}}(\mathbf{z}|\mathbf{x})$, which can be intractable in general case when dealing with large datasets and complicated likelihood $p_{\boldsymbol{\theta}}(\mathbf{x}|\mathbf{z})$ that can be represented by neural networks. Basically, we need to obtain this distribution to perform E step
\begin{equation}\label{eq:vae_elbo_deriv}
\begin{aligned}
q(\mathbf{z})=\prod_{i = 1}^{n}q(\mathbf{z}_{i})=\prod_{i = 1}^{n} p_{\boldsymbol{\theta}}(\mathbf{z}_{i}|\mathbf{x}_{i})=\prod_{i = 1}^{n}\frac{p_{\boldsymbol{\theta}}(\mathbf{z}_{i}|\mathbf{x}_{i})p_{\boldsymbol{\theta}}(\mathbf{z}_{i})}{\int p_{\boldsymbol{\theta}}(\mathbf{z}_{i}|\mathbf{x}_{i})p_{\boldsymbol{\theta}}(\mathbf{z}) dz}.
\end{aligned}
\end{equation}
However, in general setting the denominator is intractable. 

In ~\cite{kingma2013auto,rezende2014stochastic} the authors proposed solution to the problems of the aforementioned scenario. Lets consider such setting of probabilistic latent variable models when we have dataset $\mathbf{X}$= $\{\mathbf{x}^{(i)} \}_{i=1}^{N}$ of $\textit{N}$ i.i.d. data samples of some observed random variable $\mathbf{x}$. Under assumption that the observed data came from some generative  random process, involving latent random variable $\mathbf{z}$, the process is defined as follows: $\mathbf{z}$ comes from prior distribution $p_{\boldsymbol{\theta}^*}(\mathbf{z})$ and the observed sample $\mathbf{x}$ is generated by some process defined by $p_{\boldsymbol{\theta}^*}(\mathbf{x}| \mathbf{z})$. We assume that this two distributions that define generative process come from parametric families $p_{\boldsymbol{\theta}}(\mathbf{x}| \mathbf{z})$ and $p_{\boldsymbol{\theta}}(\mathbf{z})$ that are differentiable almost everywhere. In this setting, true parameters $\boldsymbol{\theta}^*$ and $\mathbf{z}$ are unknown. 
\subsection{Model Definition}
The ~\cite{kingma2013auto,rezende2014stochastic} authors propose a scalable solution for learning LVM that addresses intractability of posterior distributions and big datasets. They propose approximate ML or MAP estimation for the parameters $\boldsymbol{\theta}$ that allow one to analyze some hidden process as well as generate new data samples by mimicking the true generative process defined by $p_{\boldsymbol{\theta}^*}(\mathbf{x}| \mathbf{z})$ and $p_{\boldsymbol{\theta}^*}(\mathbf{z})$. Also, they provide method for efficient approximate posterior inference of $\mathbf{z}$ given $\mathbf{x}$.

The core idea behind the proposed approach is to represent intractable posterior distribution $p_{\boldsymbol{\theta}}(\mathbf{z}|\mathbf{x})$ by a flexible variational approximation $q_{\boldsymbol{\phi}}( \mathbf{z}|\mathbf{x})$ (a neural network). Thus, authors define $\textit{recognition model}$ as a probabilistic $\textit{encoder}$  $q_{\boldsymbol{\phi}}( \mathbf{z}|\mathbf{x})$ and $p_{\boldsymbol{\theta}}(\mathbf{x}|\mathbf{z})$ as probabilistic $\textit{decoder}$, both represented by neural networks with parameters $\boldsymbol{\phi}$ and  $\boldsymbol{\theta}$ respectively. The resulting directed probabilistic model is depicted in Figure~\ref{fig:vae_graph}.

\begin{figure}[h]
	\centering
	\begin{subfigure}[h]{0.45\textwidth}
		\includegraphics[width=\textwidth]{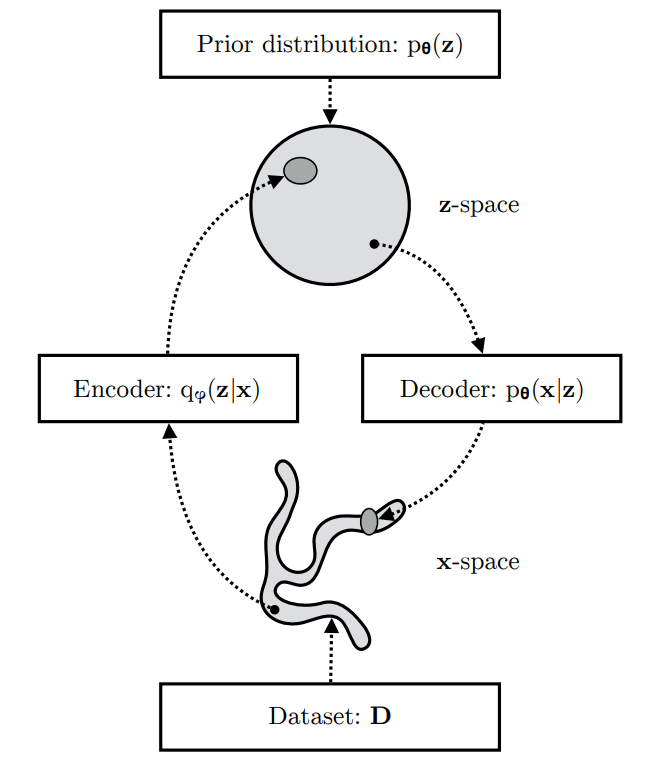}
		\caption{}
	\end{subfigure}
	\begin{subfigure}[h]{0.25\textwidth}
		\includegraphics[width=\textwidth]{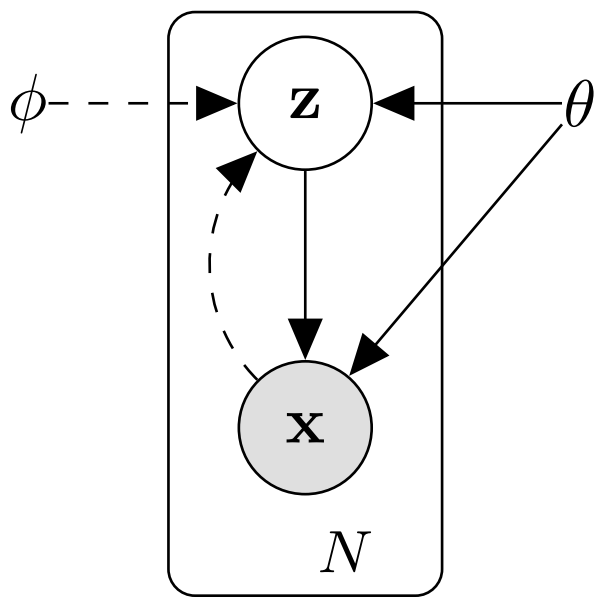}
		\caption{}
	\end{subfigure}
	\caption{Graphical directed latent variable model of variational autoencoder (from \cite{kingma2017variational} (a) and \cite{kingma2013auto} (b)). The generative model learns $p_{\boldsymbol{\theta}}(\mathbf{x},\mathbf{z})$, that factorizes into a decoder $p_{\boldsymbol{\theta}}(\mathbf{x}| \mathbf{z})$ and a prior $p_{\boldsymbol{\theta}}(\mathbf{z})$ distribution. The encoder $q_{\boldsymbol{\phi}}( \mathbf{z}|\mathbf{x})$ approximates intractable posterior $p_{\boldsymbol{\theta}}( \mathbf{z}|\mathbf{x})$ of the generative model.}
\label{fig:vae_graph}
\end{figure}

In this setting, we again wish to maximize marginal data log-likelihood $\textrm{log }p_{\boldsymbol{\theta}}(\mathbf{x})$ by maximizing its variational lower bound. We can derive variational lower bound (ELBO) on true data log-likelihood in the current setting as follows
\begin{equation}\label{eq:vae_elbo_deriv}
\begin{aligned}
    \textrm{log }p_{\boldsymbol{\theta}}(\mathbf{x})  &= \mathbb{E}_{q_{\boldsymbol{\phi}}( \mathbf{z}|\mathbf{x})} \Big[ \textrm{log }p_{\boldsymbol{\theta}}(\mathbf{x}) \Big] \\
     &= \mathbb{E}_{q_{\boldsymbol{\phi}}( \mathbf{z}|\mathbf{x})} \bigg[ \textrm{log} \left\{ \frac{p_{\boldsymbol{\theta}}(\mathbf{x}, \mathbf{z})}{q_{\boldsymbol{\phi}}( \mathbf{z}|\mathbf{x})} \frac{q_{\boldsymbol{\phi}}( \mathbf{z}|\mathbf{x})}{p_{\boldsymbol{\theta}}(\mathbf{z}| \mathbf{x})} \right\} \bigg] \\
     &= \mathbb{E}_{q_{\boldsymbol{\phi}}( \mathbf{z}|\mathbf{x})} \bigg[ \textrm{log} \left\{ \frac{p_{\boldsymbol{\theta}}(\mathbf{x}, \mathbf{z})}{q_{\boldsymbol{\phi}}( \mathbf{z}|\mathbf{x})}  \right\}\bigg] + 
    \mathbb{E}_{q_{\boldsymbol{\phi}}( \mathbf{z}|\mathbf{x})} \bigg[ \textrm{log} \left\{ \frac{q_{\boldsymbol{\phi}}( \mathbf{z}|\mathbf{x})}{p_{\boldsymbol{\theta}}(\mathbf{z}| \mathbf{x})} \right\} \bigg] \\
     &= \mathcal{L}(\boldsymbol{\theta},\boldsymbol{\phi}) + D_{\KL}\KLdel{q_{\boldsymbol{\phi}}( \mathbf{z}|\mathbf{x})}{p_{\boldsymbol{\theta}}(\mathbf{z}|\mathbf{x})}.
\end{aligned}
\end{equation}
Then, the variational lower bound can be decomposed as
\begin{equation}\label{eq:vae_elbo_decomp}
\mathcal{L}(\boldsymbol{\theta},\boldsymbol{\phi}) = \mathbb{E}_{q_{\boldsymbol{\phi}}( \mathbf{z}|\mathbf{x})} \bigg[ \textrm{log} \left\{ \frac{p_{\boldsymbol{\theta}}(\mathbf{x}| \mathbf{z})p_{\boldsymbol{\theta}}(\mathbf{x})}{q_{\boldsymbol{\phi}}( \mathbf{z}|\mathbf{x})}  \right\}\bigg] = \mathbb{E}_{q_{\boldsymbol{\phi}}( \mathbf{z}|\mathbf{x})} [\log p_{\boldsymbol{\theta}}(\mathbf{x}|\mathbf{z})] - D_{\KL}\KLdel{q_{\boldsymbol{\phi}}( \mathbf{z}|\mathbf{x})}{p_{\boldsymbol{\theta}}(\mathbf{z})},
\end{equation}
which we wish to maximize with respect to parameters $\boldsymbol{\theta}$ and $\boldsymbol{\phi}$ to maximize the true marginal data log-likelihood. Thus, the final objective has form of
\begin{equation}\label{eq:vae_objective}
\boldsymbol{\theta^*},\ \boldsymbol{\phi}^* =\mathop{\mathrm{arg\,max}}_{\boldsymbol{\theta},\boldsymbol{\phi}}\mathcal{L}(\boldsymbol{\theta},\boldsymbol{\phi})=\mathop{\mathrm{arg\,max}}_{\boldsymbol{\theta},\boldsymbol{\phi}} \ \mathbb{E}_{q_{\boldsymbol{\phi}}( \mathbf{z}|\mathbf{x})} [\log p_{\boldsymbol{\theta}}(\mathbf{x}|\mathbf{z})] - D_{\KL}\KLdel{q_{\boldsymbol{\phi}}( \mathbf{z}|\mathbf{x})}{p_{\boldsymbol{\theta}}(\mathbf{z})}
\end{equation}
Optimizing this objective can be seen as minimizing reconstruction error (first term) and making the approximate distribution $q_{\boldsymbol{\phi}}( \mathbf{z}|\mathbf{x})$ (represented by encoder network) closer to the selected prior $p_{\boldsymbol{\theta}}(\mathbf{z})$ (second term). Thus, the resulting model in Figure~\ref{fig:vae_model} is trained to encode and decode the data sample as well as keeping the distribution of encoded representations close to the selected prior distribution.
\begin{figure}[h]
\centering
\includegraphics[width=0.4\linewidth]{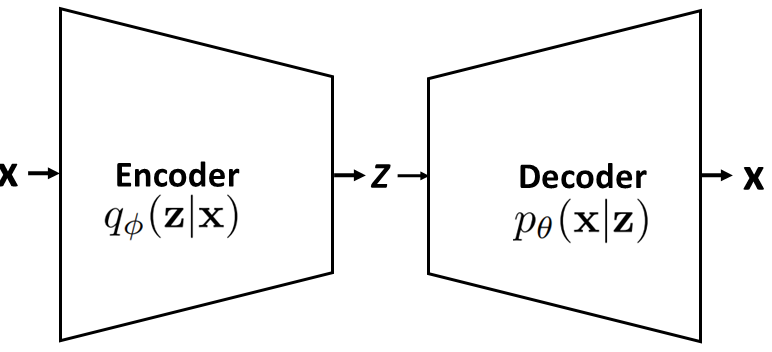}
\caption{Variational autoencoder model architecture}
\label{fig:vae_model}
\end{figure}

In the basic variational autoencoder setting, we let the prior distribution $p_{\boldsymbol{\theta}}(\mathbf{z})$ over latent variables be spherical multivariate isotropic multivariate Gaussian distribution $\mathcal{N}(\mathbf{z};0,I)$. On top of that, $p_{\boldsymbol{\theta}}(\mathbf{x}|\mathbf{z})$ can be multivariate Gaussian or Bernoulli distribution with parameters conditioned on $\mathbf{z}$ and computed using neural network. Regarding the approximation $q_{\boldsymbol{\phi}}( \mathbf{z}|\mathbf{x})$ to the true posterior, we assume that it has Gaussian form with diagonal covariance, meaning 
\begin{equation}\label{eq:approx_posterior_gaussian}
q_{\boldsymbol{\phi}}( \mathbf{z}|\mathbf{x})=\mathcal{N}(\mathbf{z}; \boldsymbol{\mu}(\mathbf{x}), \boldsymbol{\sigma}^2(\mathbf{x})I).
\end{equation}
For this distribution, $\boldsymbol{\mu}$ and $\boldsymbol{\sigma}$ are the ouptputs of the encoder neural network with parameters $\boldsymbol{\phi}$ that is conditioned on the input sample $\mathbf{x}$. 

Fitting the directed latent variable model represented by variational autoencoder results in generative and inference models. The inference model $q_{\boldsymbol{\phi}}( \mathbf{z}|\mathbf{x})$ can be used for the purposes of data representation tasks and for tasks related to semi-supervised training. The generative model $p_{\boldsymbol{\theta}}(\mathbf{x}|\mathbf{z})$ $p_{\boldsymbol{\theta}}(\mathbf{z})$ can be used for producing new data that resembles original data samples. It can be done by sampling $\mathbf{z}$ from prior distribution $p_{\boldsymbol{\theta}}(\mathbf{z})$ and passing them to the decoder network. The decoder and encoder networks together can be used for such tasks as image denoising, inpainting, and super-resolution.
\subsection{Connection to EM algorithm}
Training framework of variational autoencoders (VAE) can be interpreted as an extension of EM algorithm when the E step can not be performed due to intractability of posterior distribution $p_{\boldsymbol{\theta}}( \mathbf{z}|\mathbf{x})$. In the setting of VAE, E and M steps can be seen as maximizing ELBO with respect to parameters $\boldsymbol{\phi}$ and $\boldsymbol{\theta}$ respectively. The reasoning is next, we wish to maximize marginal log-likelihood of the data that is lower bounded by ELBO since from Equation~\ref{eq:vae_elbo_decomp} we know that
\begin{equation}\label{eq:vae_elbo_em}
\begin{aligned}
    \textrm{log }p_{\boldsymbol{\theta}}(\mathbf{x}) =  \mathcal{L}(\boldsymbol{\theta},\boldsymbol{\phi}) + D_{\KL}\KLdel{q_{\boldsymbol{\phi}}( \mathbf{z}|\mathbf{x})}{p_{\boldsymbol{\theta}}(\mathbf{z}|\mathbf{x})}.
\end{aligned}
\end{equation}
Having this equation, we can reformulate EM algorithm in terms of VAE framework:
\begin{itemize}
  \item \textit{E-step}. For each $\mathbf{x}$ $\in$ $\mathcal{D}$ we wish compute approximate posterior which is equivalent to making ELBO tight by:
  \begin{equation} \label{eq:em_vae_elbo_e_step}
      q_{\boldsymbol{\phi}}( \mathbf{z}|\mathbf{x})=\mathop{\mathrm{arg\,max}}_{\boldsymbol{\phi}}\mathcal{L}(\boldsymbol{\theta},\boldsymbol{\phi})=\mathop{\mathrm{arg\,min}}_{\boldsymbol{\phi}}D_{\KL}\KLdel{q_{\boldsymbol{\phi}}( \mathbf{z}|\mathbf{x})}{p_{\boldsymbol{\theta}}(\mathbf{z}|\mathbf{x})}
  \end{equation}
  Which is true, since the $\textrm{log }p_{\boldsymbol{\theta}}(\mathbf{x})$ does not depend on parameters $\boldsymbol{\phi}$ and thus maximizing ELBO with respect to this parameters will minimize the KL divergence term between true posterior and approximation.
  \item \textit{M-step}. Having ELBO tight to true data log-likelihood, we can find the parameters $\boldsymbol{\theta}$ that maximize it
  \begin{equation} \label{eq:em_vae_elbo_m_step}
      p_{\boldsymbol{\theta}}(\mathbf{x}|\mathbf{z})=\mathop{\mathrm{arg\,max}}_{\boldsymbol{\theta}}\mathcal{L}(\boldsymbol{\theta},\boldsymbol{\phi})=\mathop{\mathrm{arg\,max}}_{\boldsymbol{\theta}}\log p_{\boldsymbol{\theta}}(\mathbf{x}).
  \end{equation}
\end{itemize}
\subsection{Gradient Estimators and Reparametrization Trick}
As we mentioned before, we wish to optimize ELBO $\mathcal{L}(\boldsymbol{\theta},\boldsymbol{\phi})$ both with respect to parameters $\boldsymbol{\theta}$ and $\boldsymbol{\phi}$. The optimization w.r.t. $\boldsymbol{\theta}$ can be performed easily using Monte Carlo gradient estimator:
\begin{align}
    \nabla_{\boldsymbol{\theta}} \mathcal{L}(\boldsymbol{\theta},\boldsymbol{\phi}) &= \mathbb{E}_{q_{\boldsymbol{\phi}}( \mathbf{z}|\mathbf{x})} \Big[ \nabla_{\boldsymbol{\theta}} \left\{ \textrm{log }  p_{\boldsymbol{\theta}}(\mathbf{x}, \mathbf{z})- \textrm{log }q_{\boldsymbol{\phi}}( \mathbf{z}|\mathbf{x})  \right\}\Big] \\
    & \approx \frac{1}{N} \underset{n=1}{\overset{N}{\sum}} \nabla_{\boldsymbol{\theta}} \left\{ \textrm{log }  p_{\boldsymbol{\theta}}(\mathbf{x}, \mathbf{z}^{(n)}) \right\}, \ \textrm{for}\; \mathbf{z}^{(n)} \sim q_{\boldsymbol{\phi}}( \mathbf{z}|\mathbf{x}).
\end{align}

However, the usual Monte Carlo gradient estimator for parameters $\boldsymbol{\phi}$ will suffer from high variance. In stochastic computation graphs, which include both deterministic and stochastic nodes, the backpropagation algorithm has is not straight forward to perform. Therefore, in ~\cite{kingma2013auto} authors employ so called reparametrization trick (see Figure~\ref{fig:rep_trick}) to efficiently propagate gradients through stochastic nodes. In particular case of VAE, authors represent sampling procedure $\mathbf{z} \sim q_{\boldsymbol{\phi}}( \mathbf{z}|\mathbf{x})$ as a differentiable transformation of independent random variable $\boldsymbol{\epsilon} \sim p(\boldsymbol{\epsilon})$ such that $\mathbf{z}=\mathbf{g}(\boldsymbol{\epsilon}, \boldsymbol{\phi}, \mathbf{x})$. Then Monte Carlo gradient estimator can be efficiently used again:
\begin{align}
    \nabla_{\boldsymbol{\theta}} \mathcal{L}(\boldsymbol{\theta},\boldsymbol{\phi}) &= \nabla_{\boldsymbol{\phi}} \mathbb{E}_{p(\boldsymbol{\epsilon})} \Big[   \textrm{log }  p_{\boldsymbol{\theta}}(\mathbf{x}, \mathbf{z})- \textrm{log }q_{\boldsymbol{\phi}}( \mathbf{z}|\mathbf{x})  \Big] \\
    &= \mathbb{E}_{p(\boldsymbol{\epsilon})} \Big[ \nabla_{\boldsymbol{\phi}} \left\{  \textrm{log }  p_{\boldsymbol{\theta}}(\mathbf{x}, \mathbf{z})- \textrm{log }q_{\boldsymbol{\phi}}( \mathbf{z}|\mathbf{x}) \right\}  \Big] \\
    & \approx \frac{1}{N} \underset{n=1}{\overset{N}{\sum}} \nabla_{\boldsymbol{\phi}} \left\{ \textrm{log }  p_{\boldsymbol{\theta}}(\mathbf{x}, \mathbf{z}^{(n)}) -\textrm{log }q_{\boldsymbol{\phi}}( \mathbf{z}^{(n)}|\mathbf{x}) \right\}, \\
    & \quad \textrm{for}\; \mathbf{z}^{(n)}= \mathbf{g}(\boldsymbol{\epsilon}^{(n)}, \boldsymbol{\phi}, \mathbf{x})
    \ \textrm{and}\; \boldsymbol{\epsilon}^{(n)} \sim p(\boldsymbol{\epsilon}).
\end{align}
For the particular case of variational autoencoder with Gaussian multivariate prior distribution where
\begin{equation}\label{eq:approx_posterior_gaussian_}
q_{\boldsymbol{\phi}}( \mathbf{z}|\mathbf{x})=\mathcal{N}(\mathbf{z}; \boldsymbol{\mu}(\mathbf{x}), \boldsymbol{\sigma}^2(\mathbf{x})I).
\end{equation}
the reparametrization trick is done as
\begin{align}
\textrm{for}\; \mathbf{z}^{(n)}= \boldsymbol{\mu}^{(n)} + \boldsymbol{\sigma}^{(n)}\odot\boldsymbol{\epsilon}^{(n)}
    \ \textrm{and}\; \boldsymbol{\epsilon}^{(n)} \sim \mathcal{N}(0,I).
\end{align}

\begin{figure}[ht]
\centering
\includegraphics[width=0.6\linewidth]{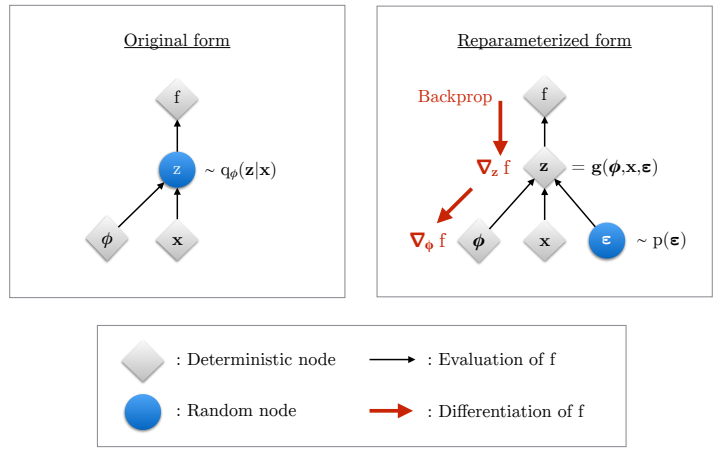}
\caption{Reparametrization trick (from \cite{kingma2017variational}).}
\label{fig:rep_trick}
\end{figure}

\subsection{Representation Learning and Visualizations}

It is known that having only the reconstruction criterion in autoencoders is not sufficient for learning useful representations \cite{bengio2013representation}. Initially, different regularization approaches have been introduced to make autoencoder models learn useful representations: denoising, contractive, and sparse autoencoder variants. The VAE objective contains a regularization term that comes from the variational lower bound itself which does not specify that it is required to learn useful representations in its original form.

The authors in \cite{kingma2013auto} select a low-dimensional latent space for VAE
and use the learned encoders (recognition model) to project high-dimensional data to a low-dimensional manifold. See Figure~\ref{fig:2dmanifolds} for visualisations of the 2D latent manifolds for the MNIST and Frey Face datasets.

\begin{figure}[h]
	\centering
	\begin{subfigure}[h]{0.30\textwidth}
		\includegraphics[width=\textwidth]{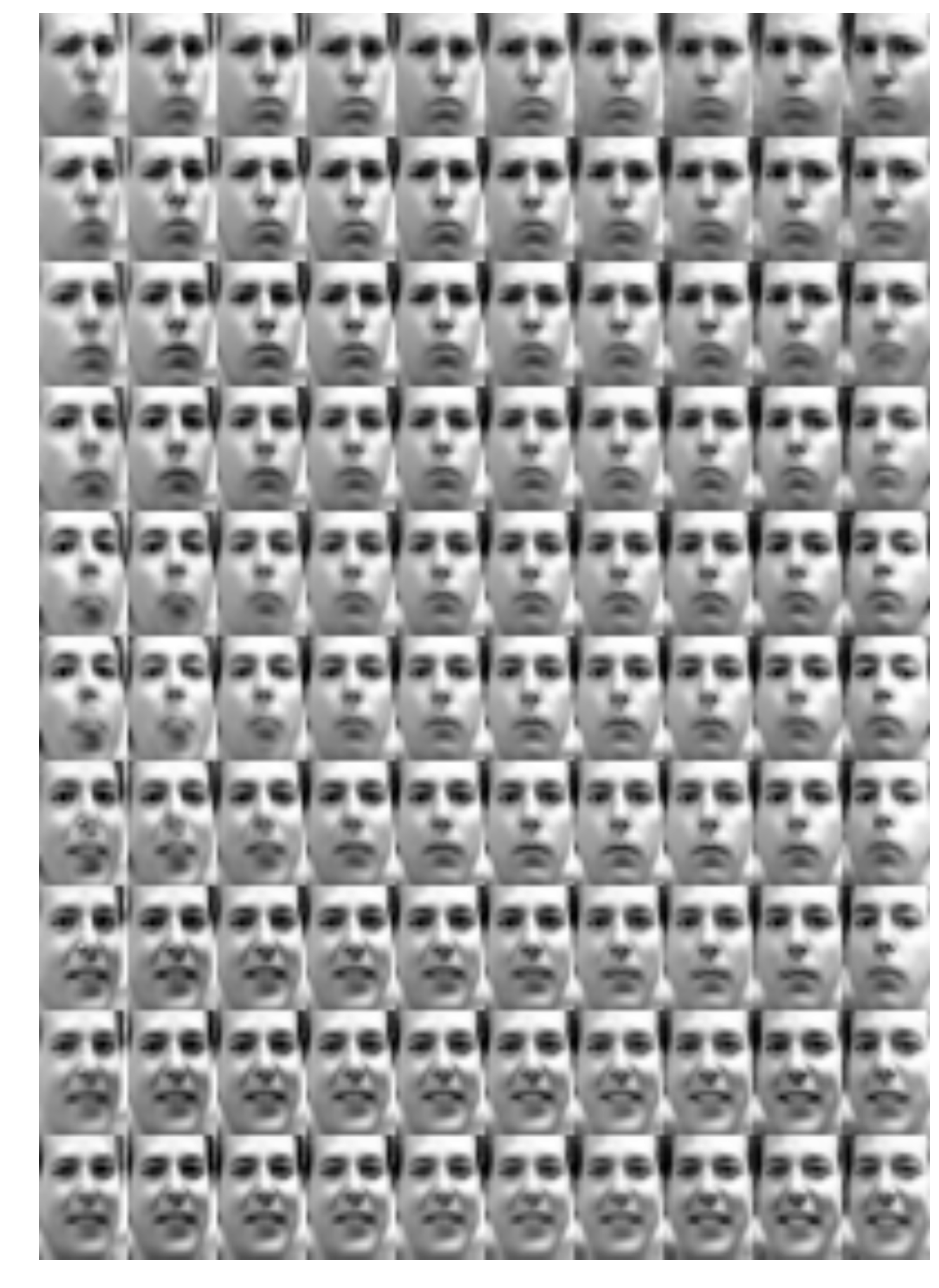}
		\caption{Learned Frey Face manifold}
	\end{subfigure}
	\begin{subfigure}[h]{0.45\textwidth}
		\includegraphics[width=\textwidth]{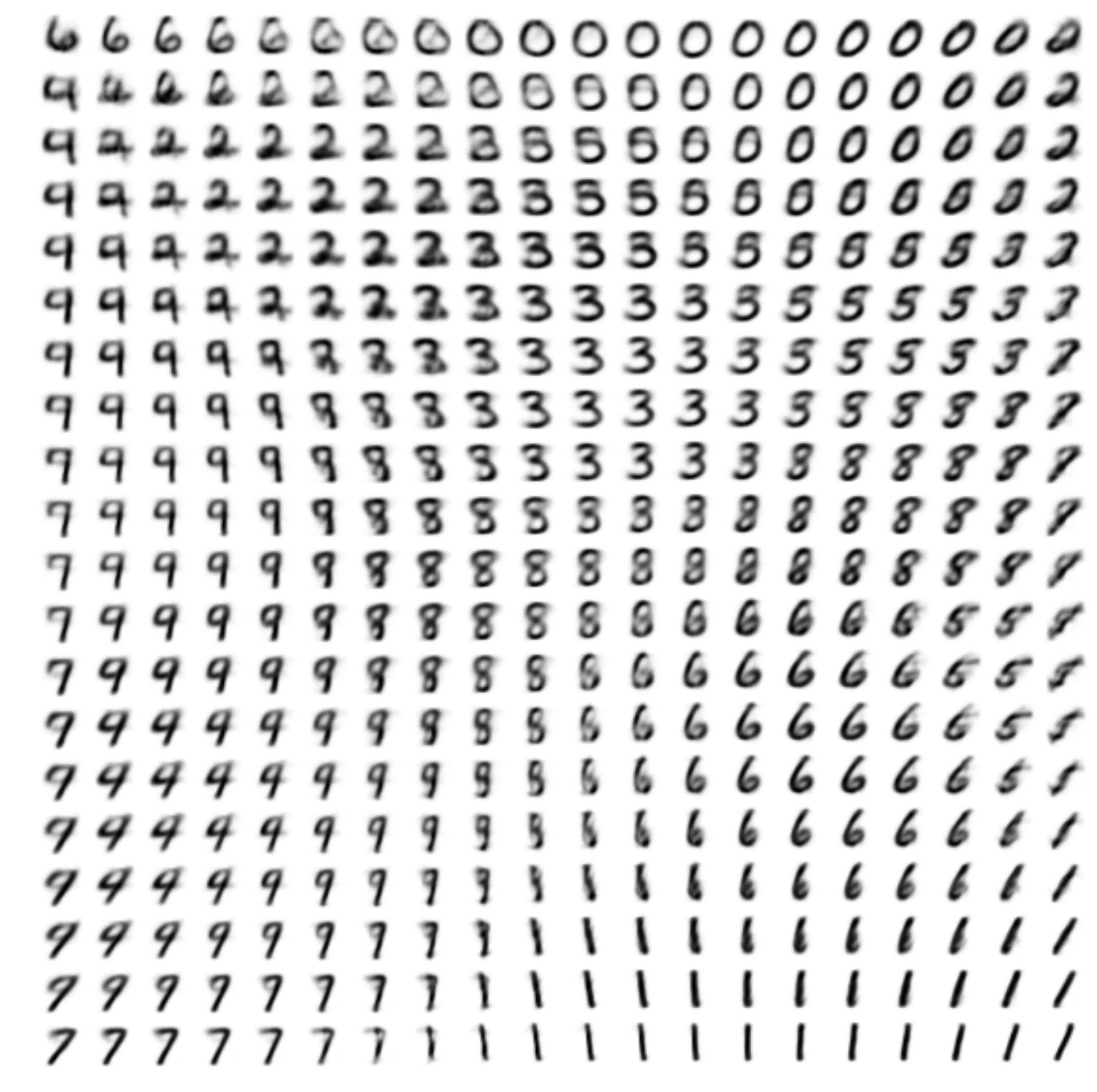}
		\caption{Learned MNIST manifold}
	\end{subfigure}
	\caption{Visualisations of learned data manifold for VAE generative mode
ls with two-dimensional latent space (from \cite{kingma2013auto}).}
\label{fig:2dmanifolds}
\end{figure}
\section{Beta Variational Autoencoder ($\beta$-VAE)\label{sec:beta_vae}}
\subsection{Training Framework}
In \cite{higgins2017beta,burgess2018understanding} the authors proposed modification for original variational autoencoder framework called $\beta$-VAE for automated discovery of interpretable factorised latent representations from raw image data in a completely unsupervised manner that improves representation learing in VAE. On top of that, in \cite{burgess2018understanding} authors introduced theoretical insights for the sources of disentangled and interpretable representations in VAE. 

The core idea is to introduce hyperparameter $\beta$ that modulates the learning constraints in the trained model. This constraints control the capacity of the latent information channel and learning of statistically independent latent factors. $\beta$-VAE  with $\beta$=1 corresponds to the original VAE framework. With $\beta$ $>$ 1 the model is forced to learn a more efficient and disentangled latent representation of the data. The resulting modification to the original ELBO objective is next
\begin{align} \label{eq_beta_vae}\mathbf{}
    \mathcal{L}(\theta, \phi; \beta)=\mathbb{E}_{q_{\boldsymbol{\phi}}( \mathbf{z}|\mathbf{x})} [\log p_{\boldsymbol{\theta}}(\mathbf{x}|\mathbf{z})] - \beta D_{\KL}\KLdel{q_{\boldsymbol{\phi}}( \mathbf{z}|\mathbf{x})}{p_{\boldsymbol{\theta}}(\mathbf{z})}
\end{align}
where $\beta$ is KKT multiplier that acts as regularization coefficient that constrains the capacity of the latent information channel $\mathbf{z}$ and puts implicit independence pressure on the approximate posterior because of the isotropic nature of the selected Gaussian prior distribution.
\subsection{Information Theoretic Perspective}
In \cite{burgess2018understanding} authors consider approximate posterior distribution $q_{\boldsymbol{\phi}}( \mathbf{z}|\mathbf{x})$ as an as an information bottleneck for the reconstruction task that is learned by
\begin{align} \label{eq_beta_vae_reconstr}
    \max_{\boldsymbol{\theta}}\mathbb{E}_{q_{\boldsymbol{\phi}}( \mathbf{z}|\mathbf{x})} [\log p_{\boldsymbol{\theta}}(\mathbf{x}|\mathbf{z})].
\end{align}
Authors argue that in $\beta$-VAE objective, $q_{\boldsymbol{\phi}}( \mathbf{z}|\mathbf{x})$ is trained to transmit information efficiently about the observations $\mathbf{x}$ by joint minimization of $\beta$-weighted KL-divergence and marginal data log-likelihood maximization.

In this setting, the posterior $q_{\boldsymbol{\phi}}( \mathbf{z}|\mathbf{x})$ matched to Gaussian prior $p_{\boldsymbol{\theta}}(z_i) = \mathcal{N}(0, 1)$. For each latent unit $z_i$, we can take an information theoretic perspective with mean-field approach and think of $q_{\boldsymbol{\phi}}( \mathbf{z}|\mathbf{x})$ as a set of independent Gaussian channels $z_i$, each noisily transmitting information about the encoded data samples $\mathbf{x}$. From this perspective, the term $D_{\KL}\KLdel{q_{\boldsymbol{\phi}}( \mathbf{z}|\mathbf{x})}{p_{\boldsymbol{\theta}}(\mathbf{z})}$ of the objective function ~\ref{eq_beta_vae} can be interpreted as an upper bound on the information that can be transmitted through the latent channels per data sample. The KL divergence is zero when $q(z_i|\mathbf{x})=p_{\boldsymbol{\theta}}(\mathbf{z})$, i.e $\mu_i$ is always zero, and $\sigma_i$ always 1, meaning the latent channels $z_i$ have zero capacity. 
\subsection{Representation Learning}
$\beta$-VAE aligns latent dimensions with components that make different contributions to reconstruction. By reasoning in \cite{burgess2018understanding}, reconstruction task under this bottleneck aligns the data observed points on a set of representational axes where nearby points on the axes are also close in data space. This is due to a strong pressure for overlapping posteriors that forces $\beta$-VAE to find a representation space that preserves the locality of points on the data manifold. The VAE model trained in this framework finds disentangled representations that align with generative factors of data since it is able to find latent components which make different contributions to the log-likelihood term in the objective ~\ref{eq_beta_vae}. These latent components correspond to properties in observed data that are somehow qualitatively different, and thus may embed into the generative factors in the data. In Figure \ref{fig:beta_vae_vae_comparison} you can see comparison of learned feature space disentanglment of VAE and $\beta$-VAE models.
\begin{figure}[h]
\centering
\includegraphics[width=0.5\linewidth]{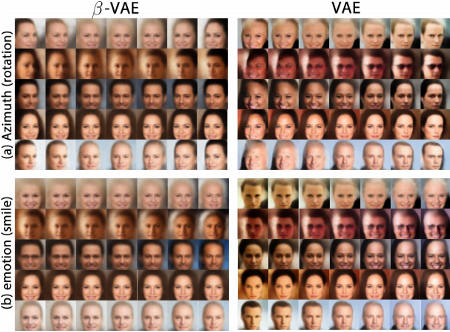}
\caption{Manipulating latent variables on celebA of VAE and $\beta$-VAE models (from \cite{higgins2017beta}).}
\label{fig:beta_vae_vae_comparison}
\end{figure}
\subsection{Capacity Control Increase}
On top of constraining KL divergence term impact in $\beta$-VAE by weighting coefficient $\beta$, in \cite{burgess2018understanding} authors propose to constrain this term by target coefficient $C$ that is increased during VAE training with objective
 \begin{align} \label{cc_bvae_objective}
    \mathcal{L}(\theta, \phi; C)  =  \mathbb{E}_{q_{\boldsymbol{\phi}}( \mathbf{z}|\mathbf{x})} [\log p_{\boldsymbol{\theta}}(\mathbf{x}|\mathbf{z})] - \gamma \ |D_{\KL}\KLdel{q_{\boldsymbol{\phi}}( \mathbf{z}|\mathbf{x})}{p_{\boldsymbol{\theta}}(\mathbf{z})} - C|.
\end{align}
The intuition is gradually adding more latent encoding capacity, enabling more learned factors of variation to be represented while retaining disentangling in previously
learned factors.

\chapter{Variational Mutual Information Maximization for VAE}
\section{Motivation\label{sec:motivation}}
\subsection{Latent Representations}
Latent variable models such as variational autoencoders \cite{kingma2013auto,rezende2014stochastic} is a powerful approach to generative modeling of complicated distributions. This models are defined as  $p_{\boldsymbol{\theta}}(\mathbf{x}|\mathbf{z})p_{\boldsymbol{\theta}}(\mathbf{z})$ and trained in the framework of maximum likelihood estimation with marginal data log-likelihood. Although the latent variables $\mathbf{z}$ are not observed,
they could provide a high-level representation and align with underlying generative factors of the observations $\mathbf{x}$. Thus, this latent variables can serve as useful representations for various tasks in machine learning\cite{bengio2013representation}. 

Since variational autoencoder model can be fitted without labeled data, it can be used for unsupervised and semi-supervised learning tasks, which could be an important part of machine learning system. Using ELBO objective alone could not be enough to force the model to learn useful representations by $\mathbf{z}$. The amount of useful and interpretable information in $\mathbf{z}$ directly relies on the expressiveness of the selected parametric family of models $p_{\boldsymbol{\theta}}(\mathbf{x},\mathbf{z})$ with respect to the true data distribution \cite{chen2016variational,chen2018isolating}, assumtions related to the form of underlying genearative process, and model initialization. This leads to difficulties of VAE use in applications such as natural language processing and application involving discrete data.
\subsection{ELBO Limitations}
VAE framework is capable of learning interpretable, disentangled and useful representations by latent codes $\mathbf{z}$ from unlabeled or weakly labeled data. Despite impressive results in various tasks such as in \cite{higgins2017beta}, there are ones in which the use of VAE use is still challenging. For instance, in \cite{bowman2016generating} the authors found that it is possible for decoder to fail  learning useful representations when approximate posterior distribution collapsed completely to the selected prior distribution, meaning $q_{\boldsymbol{\phi}}( \mathbf{z}|\mathbf{x})=p_{\boldsymbol{\theta}}(\mathbf{z})$.

The reason for such issues is that original ELBO objective does not provide explicit measure for the quality of representations that latent variable learn since the latent variable is marginalized in the final ELBO ojective. It is so since the marginalized log-likelihood $\log p_{\boldsymbol{\theta}}(\mathbf{x})$ that is lower bounded by ELBO is solely function of $\mathbf{x}$. At the same time, the full model is represented by $p_{\boldsymbol{\theta}}(\mathbf{x}|\mathbf{z})p_{\boldsymbol{\theta}}(\mathbf{z})=p_{\boldsymbol{\theta}}(\mathbf{x},\mathbf{z})$. In the setting of VAE, the approximation to the marginal log-likelihood is maximized in hope that it will recover true generative process with interpretable and useful latent representations. This approach may lead to feasible results when there are strong constraints on joint distribution. On the other hand, training high-capacity decoders can result in ignoring conditioning on $\mathbf{z}$ ($p_{\boldsymbol{\theta}}(\mathbf{x}|\mathbf{z})=p_{\boldsymbol{\theta}}(\mathbf{x})$) but still have high ELBO and marginal likelihood $\log p_{\boldsymbol{\theta}}(\mathbf{x})$ \cite{bowman2016generating,chen2016variational}. \textbf{Thus, obtaining high ELBO does not necessarily lead to good quality latent representations}. On top of that, in \cite{alemi2018fixing} the authors showed that VAE models from the same parametric family and with identical ELBO can have different quantitative and qualitative characteristics. 
\section{Definition of Proposed Framework\label{sec:framework}}

The key idea of our approach is to improve learned representations in VAE and overcome ELBO limitations by providing an explicit control technique for relations between observations and latent codes. The aim of the proposed approach is to maximize mutual information (MI) between latent variables $\mathbf{z}$ and observations $\mathbf{x}$.  Unfortunately, exact MI computing is hard and may be intractable. To overcome this, our framework employs Variational Information Maximization \cite{barber2003algorithm} to obtain lower bound on true MI. The obtained lower bound on MI is used as a regularizer in the addition to the original VAE objective (ELBO) to force the latent codes to have strong relationship with observations, prevent the model from ignoring these codes, and learn useful representations.
\subsection{Mutual Information}
In information theory, mutual information (MI) between random variables $\mathbf{x}$ and $\mathbf{z}$ , $I(\mathbf{z}; \mathbf{x})$, measures the amount of information that can be inferred using knowledge of one random variable about another one. Mutual information can be formulated as the difference of two entropy terms:
\begin{equation}
\label{information-z-x}
    I(\mathbf{z}; \mathbf{x}) = H(\mathbf{z}) - H(\mathbf{z} | \mathbf{x}).
\end{equation}
The entropy of random variable $H(\mathbf{z})$ can be seen as a measure of uncertainty about this variable. For instance, a discrete uniform random variable will have higher entropy than the same one with probabilities distributed in a different manner. Thus, the MI can be seen as the amount of uncertainty about one random variable that is left when the value of other one is revealed. The other formulation of MI is represented as 
\begin{equation}
\label{information-z-x}
    I(\mathbf{z}; \mathbf{x}) = \mathbb{E}_{p( \mathbf{z},\mathbf{x})} \bigg[ \textrm{log} \left\{ \frac{p(\mathbf{z}|\mathbf{x})}{p(\mathbf{z})} \right\} \bigg]
\end{equation}
\subsection{Variational Mutual Information Lower Bound}
Following the reasoning in \cite{barber2003algorithm,chen2016infogan} we can derive variational lower bound on true mutual information as follows
\begin{equation}
\label{information_lower_bound_derivation}
\begin{aligned}
    I(\mathbf{z}; \mathbf{x}) = \mathbb{E}_{p( \mathbf{z},\mathbf{x})} \bigg[ \textrm{log} \{ \frac{p(\mathbf{z}|\mathbf{x})}{p(\mathbf{z})} \} \bigg]
    &= \mathbb{E}_{p(\mathbf{z},\mathbf{x})} \bigg[ \textrm{log} \left\{ \frac{p(\mathbf{z}|\mathbf{x})}{Q(\mathbf{z}|\mathbf{x})} \frac{Q(\mathbf{z}|\mathbf{x})}{p(\mathbf{z})} \right\} \bigg] \\
     &= \mathbb{E}_{p(\mathbf{x})} \bigg[ \mathbb{E}_{p( \mathbf{z}|\mathbf{x})} \bigg[ \textrm{log} \left\{ \frac{p(\mathbf{z}|\mathbf{x})}{Q(\mathbf{z}|\mathbf{x})} \right\}\bigg] + \mathbb{E}_{p( \mathbf{z}|\mathbf{x})} \bigg[ \textrm{log} \left\{ \frac{Q(\mathbf{z}|\mathbf{x})}{p(\mathbf{z})} \right\} \bigg] \\
     &= \mathbb{E}_{p(\mathbf{x})}[D_{\KL}\KLdel{p( \mathbf{z}|\mathbf{x})}{Q(\mathbf{z}|\mathbf{x})}] + \mathbb{E}_{p(\mathbf{z},\mathbf{x})}[\log Q(\mathbf{z}|\mathbf{x})] + H(\mathbf{z})\\
     &\geq \mathbb{E}_{p(\mathbf{z},\mathbf{x})}[\log Q(\mathbf{z}|\mathbf{x})] + H(\mathbf{z}),
\end{aligned}
\end{equation}
where Q is some auxiliary distribution and last inequality arises due to non-negativity of KL-divergence. The bound is tight when $Q(\mathbf{z}|\mathbf{x})=p(\mathbf{z}|\mathbf{x})$ which can be achieved by optimization with respect to the auxiliary distribution $Q$. Then we arrive to variational lower bound on MI defined as
\begin{equation}
\label{information_lower_bound_general}
\begin{aligned}
    I(\mathbf{z}; \mathbf{x}) \geq \mathbb{E}_{p(\mathbf{z},\mathbf{x})}[\log Q(\mathbf{z}|\mathbf{x})] + H(\mathbf{z}).
\end{aligned}
\end{equation}
\subsection{Variational Mutual Information Lower Bound for VAE}
In the setting of VAE we are aiming to fit latent variable model of the form $p_{\boldsymbol{\theta}}(\mathbf{x}|\mathbf{z})p_{\boldsymbol{\theta}}(\mathbf{z})$. Unsing insights from the previous chapter, we can define variational lower bound on mutual information between observed variables $\mathbf{x}$ and latent variables $\mathbf{z}$ in the setting of variational autoencoder as
\begin{equation}
\label{information_lower_bound_general_vae}
\begin{aligned}
    I(\mathbf{z}; \mathbf{x}) \geq \mathbb{E}_{p_{\boldsymbol{\theta}}(\mathbf{z},\mathbf{x})}[\log Q(\mathbf{z}|\mathbf{x})] + H(\mathbf{z})=\mathbb{E}_{p_{\boldsymbol{\theta}}(\mathbf{z})}[\mathbb{E}_{p_{\boldsymbol{\theta}}(\mathbf{x}|\mathbf{z})}[\log Q(\mathbf{z}|\mathbf{x})]] + H(\mathbf{z}).
\end{aligned}
\end{equation}
The problem with the obtained lowed bound is that it involves two nested expectations over prior and decoder distribution. To deal with it we can use next lemma
\begin{lemma}
\label{thelemma}
For random variables $X, Y$ and function $f(x, y)$ under suitable regularity conditions: 
\begin{equation}
    \mathbb{E}_{x \sim X, y \sim  Y|x} [f(x, y)] =
    \mathbb{E}_{x \sim X, y \sim  Y|x, x' \sim  X|y} [f(x', y)].
\end{equation}
\end{lemma}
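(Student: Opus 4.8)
The plan is to recognize both expectations as integrals against explicit joint densities and to show they coincide by marginalizing out the variable that does not appear in the integrand. I write $p(x)$ for the marginal of $X$, $p(y|x)$ for the conditional of $Y$ given $x$, and $p(x|y)$ for the posterior of $X$ given $y$; then the joint density is $p(x,y) = p(x)\,p(y|x)$ and the marginal of $Y$ is $p(y) = \int p(x)\,p(y|x)\,dx$.

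First I would expand the left-hand side directly as
\begin{equation}
    \mathbb{E}_{x \sim X,\, y \sim Y|x}[f(x,y)] = \int\!\!\int f(x,y)\,p(x)\,p(y|x)\,dx\,dy = \int\!\!\int f(x,y)\,p(x,y)\,dx\,dy.
\end{equation}
Next I would expand the right-hand side according to its sampling process $x \sim p(x)$, then $y \sim p(y|x)$, then $x' \sim p(x'|y)$, so that the integrand $f(x',y)$ is averaged against the product density $p(x)\,p(y|x)\,p(x'|y)$:
\begin{equation}
    \mathbb{E}_{x \sim X,\, y \sim Y|x,\, x' \sim X|y}[f(x',y)] = \int\!\!\int\!\!\int f(x',y)\,p(x)\,p(y|x)\,p(x'|y)\,dx\,dy\,dx'.
\end{equation}

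The crucial step is the observation that $f(x',y)$ does not depend on the original sample $x$. Under the regularity conditions I may interchange the order of integration and perform the inner integral over $x$ first, giving $\int p(x)\,p(y|x)\,dx = p(y)$. Substituting this back collapses the triple integral to
\begin{equation}
    \int\!\!\int f(x',y)\,p(y)\,p(x'|y)\,dy\,dx' = \int\!\!\int f(x',y)\,p(x',y)\,dy\,dx',
\end{equation}
where I used the identity $p(y)\,p(x'|y) = p(x',y)$. Renaming the dummy variable $x' \to x$ then yields exactly the left-hand side, which completes the argument.

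The main obstacle is conceptual rather than computational: one must correctly identify which joint density governs the right-hand expectation, and then recognize that drawing a fresh posterior sample $x' \sim X|y$ against the marginal $p(y)$ simply reconstructs the joint $p(x,y)$. The regularity conditions enter only to justify the Fubini-type interchange used to marginalize out $x$; once that interchange is granted, the identity follows immediately. This result is precisely what is needed to rewrite the nested expectation $\mathbb{E}_{p_{\boldsymbol{\theta}}(\mathbf{z})}[\mathbb{E}_{p_{\boldsymbol{\theta}}(\mathbf{x}|\mathbf{z})}[\log Q(\mathbf{z}|\mathbf{x})]]$ into a form that reuses samples already produced by the encoder and decoder, avoiding the separate sampling from the prior.
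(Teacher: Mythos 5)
Your proof is correct and is essentially the paper's own argument run in reverse: the paper starts from the left-hand side and uses the two factorizations $p(x)\,p(y|x) = p(x,y) = p(y)\,p(x|y)$ together with the law of total expectation to build up the nested expectation, while you start from the right-hand side and collapse it by marginalizing out $x$ using exactly the same identities. There is no gap — the Fubini-type interchange you invoke is precisely what the lemma's ``suitable regularity conditions'' are there to license.
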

\begin{proof}
(This proof is taken from ~\cite{Ford2018lemmaproof}).

Make expectations explicit: 
\begin{equation*}
        \mathbb{E}_{x \sim X, y \sim Y|x}[f(x, y)] = \mathbb{E}_{x \sim P(X)}\big[\mathbb{E}_{y \sim P(Y|X=x)}[f(x, y)]\big]
\end{equation*}

By definition of $P(Y|X=x)$ and $P(X|Y=y)$:
\begin{equation*}
         \mathbb{E}_{x \sim P(X)}\big[\mathbb{E}_{y \sim P(Y|X=x)}[f(x, y)]\big] = 
         \mathbb{E}_{x,y \sim P(X,Y)}[f(x, y)] = \mathbb{E}_{y \sim P(Y)}\big[\mathbb{E}_{x \sim P(X|Y=y)}[f(x, y)]\big]
\end{equation*}

Rename $x$ to $x'$:
\begin{equation*}
         \mathbb{E}_{y \sim P(Y)}\big[ \mathbb{E}_{x \sim P(X|Y=y)}[f(x, y)]\big] = 
         \mathbb{E}_{y \sim P(Y)}\big[ \mathbb{E}_{x' \sim P(X|Y=y)}[f(x', y)]\big]
\end{equation*}

By the law of total expectation:
\begin{equation*}
         \mathbb{E}_{y \sim P(Y)}\big[ \mathbb{E}_{x' \sim P(X|Y=y)}[f(x', y)]\big] = 
         \mathbb{E}_{x \sim P(X)}\Big[ \mathbb{E}_{y \sim P(Y|X=x)}\big[\mathbb{E}_{x' \sim P(X|Y=y)}[f(x', y)]\big]\Big]
\end{equation*}

Make expectations implicit:
\begin{equation*}
        \mathbb{E}_{x \sim P(X)}\Big[\mathbb{E}_{y \sim P(Y|X=x)}\big[\mathbb{E}_{x' \sim P(X|Y=y)}[f(x', y)]\big]\Big] = 
        \mathbb{E}_{x \sim X,y \sim Y|x,x' \sim X|y}[f(x', y)] 
\end{equation*}
\end{proof}
Using this lemma we can rewrite variational lower bound on MI for VAE as
\begin{equation}
\label{information_lower_bound_general_vae_improved}
\begin{aligned}
    I(\mathbf{z}; \mathbf{x}) \geq \mathbb{E}_{p_{\boldsymbol{\theta}}(\mathbf{z})}[\mathbb{E}_{p_{\boldsymbol{\theta}}(\mathbf{x}|\mathbf{z})}[\log Q(\mathbf{z}|\mathbf{x})]] + H(\mathbf{z})=\mathbb{E}_{\mathbf{z} \sim p_{\boldsymbol{\theta}}(\mathbf{z}), \mathbf{x} \sim p_{\boldsymbol{\theta}}(\mathbf{x}|\mathbf{z})}[\log Q(\mathbf{z}|\mathbf{x})]+H(\mathbf{z}).
\end{aligned}
\end{equation}
To make this lower bounding technique applicable to full VAE model, we substitute $p_{\boldsymbol{\theta}}(\mathbf{z})$ by approximate posterior distribution represented by encoder network $q_{\boldsymbol{\phi}}( \mathbf{z}|\mathbf{x})$. Finally, we arrive to variational MI lower bound estimate between latent variables and observations for a fixed VAE defined as
\begin{equation} \label{eq:mut_information_bound}
    \max_{Q} \ \mathbb{E}_{\mathbf{z} \sim q_{\boldsymbol{\phi}}(\mathbf{z}|\mathbf{x}), \mathbf{x} \sim  p_{\boldsymbol{\theta}}(\mathbf{x}|\mathbf{z})} [\log Q(\mathbf{z}|\mathbf{x})] + H(\mathbf{z}) \leq I(\mathbf{z};\mathbf{x}).
\end{equation}
which can be easily estimated using Monte Carlo sampling. Using this lower bound for a fixed VAE it is possible to evaluate mutual information between observations and latent variables. We represent the auxiliary distribution Q using neural network that takes output of the encoder as input.
\subsection{Variational Mutual Information Maximization Framework for VAE}
We wish to use the obtained lower bound on MI (Equation \ref{eq:mut_information_bound}) as the regularizer together with original VAE objective (ELBO) to force latent variables to have a strong relationship with observations, learn useful representations and prevent the VAE model from ignoring them. We define mutual information maximization regularizer $\mathit{MI}$ for variational autoencoder as
\begin{equation} \label{eq:mut_information_reg}
    \mathit{MI}(\theta, \phi, Q) = \mathbb{E}_{\mathbf{z} \sim q_{\boldsymbol{\phi}}(\mathbf{z}|\mathbf{x}), \mathbf{x} \sim  p_{\boldsymbol{\theta}}(\mathbf{x}|\mathbf{z})} [\log Q(\mathbf{z}|\mathbf{x})] + H(\mathbf{z})
\end{equation}
where Q is an auxiliary distribution represented by a neural network which takes the decoder output as input. We combine this regularizer with ELBO objective of VAE
\begin{equation}\label{eq:vae_objective_again}
\mathcal{L}(\boldsymbol{\theta},\boldsymbol{\phi})= \mathbb{E}_{q_{\boldsymbol{\phi}}( \mathbf{z}|\mathbf{x})} [\log p_{\boldsymbol{\theta}}(\mathbf{x}|\mathbf{z})] - D_{\KL}\KLdel{q_{\boldsymbol{\phi}}( \mathbf{z}|\mathbf{x})}{p_{\boldsymbol{\theta}}(\mathbf{z})}.
\end{equation}
Then, the final objective that we propose to use for training VAE with MI maximization is
\begin{equation}\label{eq:vmi_vae_objective}
\begin{split}
\mathcal{L}(\boldsymbol{\theta},\boldsymbol{\phi}, Q) &=\mathcal{L}(\boldsymbol{\theta},\boldsymbol{\phi}) + \lambda\mathit{MI}(\theta, \phi, Q) \\ &=\mathbb{E}_{q_{\boldsymbol{\phi}}( \mathbf{z}|\mathbf{x})} [\log p_{\boldsymbol{\theta}}(\mathbf{x}|\mathbf{z})] - D_{\KL}\KLdel{q_{\boldsymbol{\phi}}( \mathbf{z}|\mathbf{x})}{p_{\boldsymbol{\theta}}(\mathbf{z})} + \lambda\mathit{MI}(\theta, \phi, Q)
\end{split}
\end{equation}
where $\lambda$ is a scaling coefficient that controls the impact of $MI$ regularizer. Please see Figure~\ref{fig:main_model} for visualization of the proposed framework. We wish to maximize this objective with respect to the parameters of VAE $\boldsymbol{\theta}$ and $\boldsymbol{\phi}$ as well as to parameters of auxiliary network $Q$, meaning
\begin{equation}\label{eq:vmi_vae_objective_max}
\max_{\boldsymbol{\theta},\boldsymbol{\phi},Q} \ \mathcal{L}(\boldsymbol{\theta},\boldsymbol{\phi}) + \lambda\mathit{MI}(\theta, \phi, Q).
\end{equation}

For each training batch, first, we maximize the objective with respect to the auxiliary distribution $Q$ to make estimate of mutual information lower bound tighter. Then, we maximize it with respect to parameters of the VAE ($\boldsymbol{\theta},\boldsymbol{\phi}$) to train VAE using the $MI$ regularizer to maximize and control MI between latent codes and observations. Below, we provide the training procedure algorithm.
\begin{algorithm}[h]
\SetAlgoLined
$\theta, \phi, Q \gets$ Initialize parameters\;
\Repeat{convergence of parameters $\theta, \phi,Q$}{
      $\boldsymbol{X}^M \gets $ Random minibatch of $M$ datapoints (drawn from full dataset)\;
      maximize $\mathcal{L}(\boldsymbol{\theta},\boldsymbol{\phi}, Q)$ with respect to the auxiliary distribution $Q$\;
      maximize $\mathcal{L}(\boldsymbol{\theta},\boldsymbol{\phi}, Q)$ with respect to $\boldsymbol{\theta},\boldsymbol{\phi}$\;
    }
\Return $\boldsymbol{\theta},\boldsymbol{\phi}$
\caption{Training VAE with variational mutual information maximization}
\end{algorithm}
\begin{figure}[t]
\centering
\includegraphics[width=0.6\linewidth]{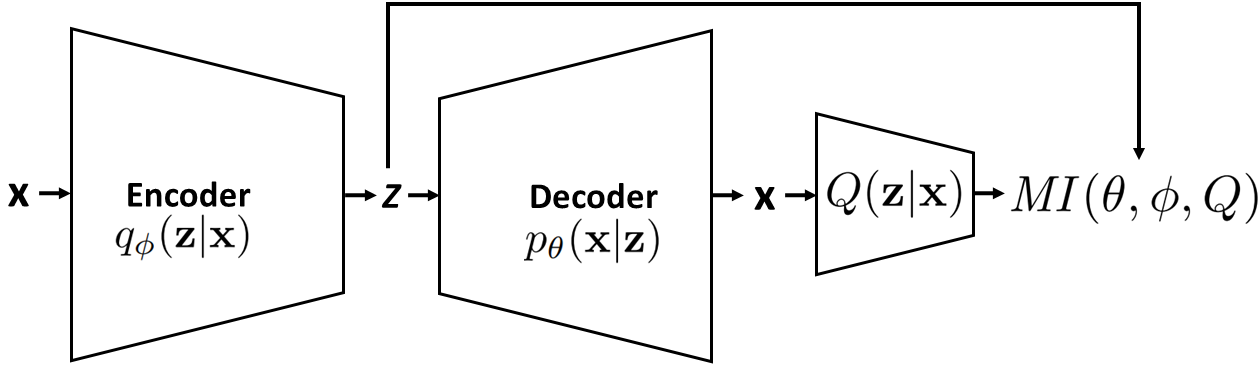}
\caption{Graphical description of the proposed model}
\label{fig:main_model}
\end{figure}
\section{Experimental Setting\label{sec:exp_setting}}
\subsection{VAE with Gaussian Latent Variable}
In this setting, we employ variational autoencoder model with 32-dimensional Gaussian spherical latent variable with prior $p_{\boldsymbol{\theta}}(\mathbf{z})=\mathcal{N}(0,1)$ and approximate posterior distribution as $q_{\boldsymbol{\phi}}( \mathbf{z}|\mathbf{x})=\mathcal{N}(\mathbf{z}; \boldsymbol{\mu}(\mathbf{x}), \boldsymbol{\sigma}^2(\mathbf{x})I)$. We train and compare two identically initialized networks with same hyperparameters on MNIST dataset. One is trained using only ELBO objective (Equation~ \ref{eq:vae_objective_again}) and the other with mutual information maximization (Equation~\ref{eq:vmi_vae_objective}). For the latter case, we selected only two components latent code vector forming sub-vector ($\mathbf{z}_{1}$,$\mathbf{z}_{2}$)=$\mathbf{\hat{z}}$. We selected them by visually inspecting influence from latent vector components in VAE trained using only ELBO to find  ones that had lesser impact on produced samples. Then, for MI maximization in this setting we define the $MI$ regularizer as
\begin{equation} \label{eq:mut_information_contin}
    \mathit{MI}(\boldsymbol{\theta}, \boldsymbol{\phi}, Q)=
    \mathbb{E}_{\mathbf{\hat{z}} \sim q_{\boldsymbol{\phi}}(\mathbf{\hat{z}}|\mathbf{x}), \mathbf{x} \sim  p_{\boldsymbol{\theta}}(\mathbf{x}|\mathbf{{z}})} [\log Q(\mathbf{\hat{z}}|\mathbf{x})] + H(\mathbf{\hat{z}}).
\end{equation}

We select only two components of the latent code since it is straightforward to illustrate their impact on observations in 2D visualizations by just manipulating 
their individual values without any latent space interpolations. Then, the resulting objective is
\begin{equation}\label{eq:vmi_vae_objective_gaussian}
\begin{split}
\mathcal{L}(\boldsymbol{\theta},\boldsymbol{\phi}, Q) &=\mathcal{L}(\boldsymbol{\theta},\boldsymbol{\phi}) + \lambda\mathit{MI}(\theta, \phi, Q)\\
&=\mathcal{L}(\boldsymbol{\theta},\boldsymbol{\phi}) + \lambda (\ \mathbb{E}_{\mathbf{\hat{z}} \sim q_{\boldsymbol{\phi}}(\mathbf{\hat{z}}|\mathbf{x}), \mathbf{x} \sim  p_{\boldsymbol{\theta}}(\mathbf{x}|\mathbf{{z}})} [\log Q(\mathbf{\hat{z}}|\mathbf{x})] + H(\mathbf{\hat{z}}) \ )
\end{split}
\end{equation}
\subsection{VAE with joint Gaussian and Discrete Latent Variable}
We also performed experiments on variational autoencoder models that involve discrete latent variable that potentially could learn and model categorical (or discrete) generative factors of data. In this section, we define setting for VAE model with joint latent distribution of continuous and discrete (categorical) variables. We define $\mathbf{z}$ as continuous part of latent code with prior $p_{\boldsymbol{\theta}}(\mathbf{z})$ and  $\mathbf{c}$ as discrete part with uniform prior. In this setting, the encoder network represents joint posterior approximation $q_\phi(\mathbf{z},\mathbf{c}|\mathbf{x})$, decoder network is $p_{\boldsymbol{\theta}}(\mathbf{x}|\mathbf{z}, \mathbf{c})$ and prior is $p_{\boldsymbol{\theta}}(\mathbf{z}, \mathbf{c})$. Then, the resulting ELBO objective for this variational autoencoder is
\begin{equation}\label{eq:elbo_joint_latent}
    \mathcal{L}(\boldsymbol{\theta},\boldsymbol{\phi})=\mathbb{E}_{q_{\boldsymbol{\phi}}( \mathbf{z},\mathbf{c}|\mathbf{x})} [\log p_{\boldsymbol{\theta}}(\mathbf{x}|\mathbf{z},\mathbf{c})] - D_{\KL}\KLdel{q_{\boldsymbol{\phi}}( \mathbf{z},\mathbf{c}|\mathbf{x})}{p_{\boldsymbol{\theta}}(\mathbf{z},\mathbf{c})}
\end{equation}
By assumption that $\mathbf{c}$ and $\mathbf{z}$ are conditionally and mutually independent, meaning 
\begin{equation}
q_{\boldsymbol{\phi}}( \mathbf{z},\mathbf{c}|\mathbf{x})=q_{\boldsymbol{\phi}}( 
\mathbf{z}|\mathbf{x})q_{\boldsymbol{\phi}}(\mathbf{c}|\mathbf{x}) \ and \ p_{\boldsymbol{\theta}}(\mathbf{z}, \mathbf{c})=p_{\boldsymbol{\theta}}(\mathbf{z})p_{\boldsymbol{\theta}}(\mathbf{c})
\end{equation}
 we can decompose KL divergence term as
\begin{equation}\label{eq:kl_joint_decomposition}
\begin{aligned}
D_{\KL}\KLdel{q_{\boldsymbol{\phi}}( \mathbf{z},\mathbf{c}|\mathbf{x})}{p_{\boldsymbol{\theta}}(\mathbf{z},\mathbf{c})}&=\mathbb{E}_{q_{\boldsymbol{\phi}}( \mathbf{z},\mathbf{c}|\mathbf{x})}\Big[\log\frac{q_{\boldsymbol{\phi}}( \mathbf{z},\mathbf{c}|\mathbf{x})}{p_{\boldsymbol{\theta}}(\mathbf{z},\mathbf{c})}\Big]\\
&=\mathbb{E}_{q_{\boldsymbol{\phi}}( \mathbf{z}|\mathbf{x})}\Big[\mathbb{E}_{q_{\boldsymbol{\phi}}(\mathbf{c}|\mathbf{x})}\Big[\log\frac{q_{\boldsymbol{\phi}}(\mathbf{z}|\mathbf{x})q_{\boldsymbol{\phi}}(\mathbf{c}|\mathbf{x})}{p_{\boldsymbol{\theta}}(\mathbf{z})p_{\boldsymbol{\theta}}(\mathbf{c})}\Big]\Big]\\&=\mathbb{E}_{q_{\boldsymbol{\phi}}( \mathbf{z}|\mathbf{x})}\Big[\mathbb{E}_{q_{\boldsymbol{\phi}}(\mathbf{c}|\mathbf{x})}\Big[\log\frac{q_{\boldsymbol{\phi}}(\mathbf{z}|\mathbf{x})}{p_{\boldsymbol{\theta}}(\mathbf{z})}\Big]\Big] + \mathbb{E}_{q_{\boldsymbol{\phi}}( \mathbf{z}|\mathbf{x})}\Big[\mathbb{E}_{q_{\boldsymbol{\phi}}(\mathbf{c}|\mathbf{x})}\Big[\log\frac{q_{\boldsymbol{\phi}}(\mathbf{c}|\mathbf{x})}{p_{\boldsymbol{\theta}}(\mathbf{c})}\Big]\Big]\\&=\mathbb{E}_{q_{\boldsymbol{\phi}}( \mathbf{z}|\mathbf{x})}\Big[\log\frac{q_{\boldsymbol{\phi}}(\mathbf{z}|\mathbf{x})}{p_{\boldsymbol{\theta}}(\mathbf{z})}\Big] + \mathbb{E}_{q_{\boldsymbol{\phi}}(\mathbf{c}|\mathbf{x})}\Big[\log\frac{q_{\boldsymbol{\phi}}(\mathbf{c}|\mathbf{x})}{p_{\boldsymbol{\theta}}(\mathbf{c})}\Big]\\&=D_{\KL}\KLdel{q_{\boldsymbol{\phi}}( \mathbf{z}|\mathbf{x})}{p_{\boldsymbol{\theta}}(\mathbf{z})}+D_{\KL}\KLdel{q_{\boldsymbol{\phi}}( \mathbf{c}|\mathbf{x})}{p_{\boldsymbol{\theta}}(\mathbf{c})}.
\end{aligned} 
\end{equation}
Then, we can summarize it as
\begin{equation}\label{eq:kl_joint_decomposition_summ}
D_{\KL}\KLdel{q_{\boldsymbol{\phi}}( \mathbf{z},\mathbf{c}|\mathbf{x})}{p_{\boldsymbol{\theta}}(\mathbf{z},\mathbf{c})}=D_{\KL}\KLdel{q_{\boldsymbol{\phi}}( \mathbf{z}|\mathbf{x})}{p_{\boldsymbol{\theta}}(\mathbf{z})}+D_{\KL}\KLdel{q_{\boldsymbol{\phi}}( \mathbf{c}|\mathbf{x})}{p_{\boldsymbol{\theta}}(\mathbf{c})}
\end{equation}
Having this KL-divergence, decomposition we can rewrite ELBO objective in Equation~\ref{eq:elbo_joint_latent} as:
\begin{equation}\label{eq:elbo_joint_latent_final}
    \mathcal{L}(\boldsymbol{\theta},\boldsymbol{\phi})=\mathbb{E}_{q_{\boldsymbol{\phi}}( \mathbf{z},\mathbf{c}|\mathbf{x})} [\log p_{\boldsymbol{\theta}}(\mathbf{x}|\mathbf{z},\mathbf{c})] - D_{\KL}\KLdel{q_{\boldsymbol{\phi}}( \mathbf{z}|\mathbf{x})}{p_{\boldsymbol{\theta}}(\mathbf{z})}-D_{\KL}\KLdel{q_{\boldsymbol{\phi}}( \mathbf{c}|\mathbf{x})}{p_{\boldsymbol{\theta}}(\mathbf{c})}
\end{equation}

\textbf{Continuous latent varibles.} In our experiments, we represent continuous latent variables by Gaussian distributions. Therefore we have approximate posterior distribution for continuous varible defined as $q_{\boldsymbol{\phi}}( \mathbf{z}|\mathbf{x})=\mathcal{N}(\mathbf{z}; \boldsymbol{\mu}(\mathbf{x}), \boldsymbol{\sigma}^2(\mathbf{x})I)$ with prior $p_{\boldsymbol{\theta}}(\mathbf{z})=\mathcal{N}(0,1)$.

\textbf{Discrete latent variables.} To make the approximate posterior distribution over discrete latent variable $q_{\boldsymbol{\phi}}(\mathbf{c}|\mathbf{x})$ differentiable, we employ relaxation techique proposed by \cite{jang2016categorical,maddison2016concrete} that is based on reparametrization Gumbel Max trick \cite{gumbel1954statistical} (which involves non-differentiable argmax opertaion). Let $\mathbf{c}$ be a categorical random variable with $K$ categories and class probabilities $\pi_1\ldots\pi_{K}$. We represent categorical variable as $K$-dimensional one-hot vector. Also, let $g_{1} \ldots g_{K}$ be i.i.d samples drawn from Gumbel$(0,1)$. Then, using following softmax function we can sample continuous approximation of categorical distribution forming k-dimensional vector with each member computed as  
\begin{equation} \label{eq:gumbel-softmax-sample}
    y_i = 
    \frac{\exp{((\log(\pi_i) + g_i) / \tau)}}
    {\sum_{j = 1}^{K} \exp{((\log(\pi_j) + g_j)/ \tau)} }
\end{equation}
for $i = 1, $\dots$, K$. $\tau$ is a temperature coefficient which can be seen as a hyperparameter of the trained model and can be varied during process of training. This relaxed distribution is referred as Concrete or Gumbel Softmax distribution and lets denote is as $GS(\boldsymbol{\pi})$, where $\boldsymbol{\pi}$ defines probabilities of categorical variable. Having this formulation for relaxation of discrete distribution, we now can make $q_{\boldsymbol{\phi}}(\mathbf{c}|\mathbf{x})$ differentiable by defining it as $q_{\boldsymbol{\phi}}(\mathbf{c}|\mathbf{x})=GS(\boldsymbol{\pi}(\mathbf{x}))$. We set prior distribution $p_{\boldsymbol{\theta}}(\mathbf{c})$ of categorical approximate posterior as uniform categorical distribution over $K$ categories. The resulting model is represented in Figure \ref{fig:joint_model}
\begin{figure}[h]
\centering
\includegraphics[width=0.6\linewidth]{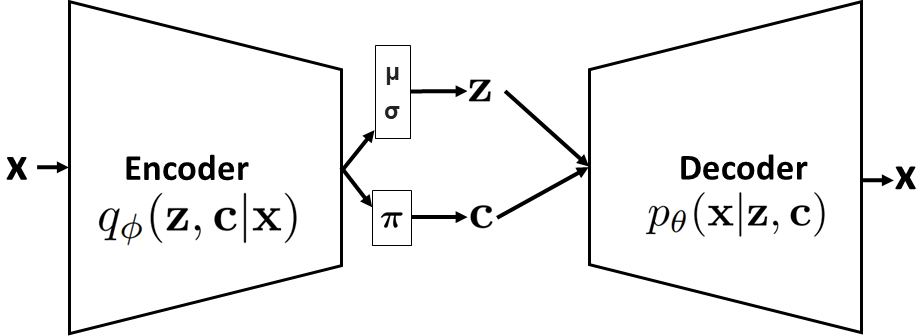}
\caption{Representation of VAE model with joint continuous and discrete latent variable}
\label{fig:joint_model}
\end{figure}

In the experiments on MNIST and FashionMNIST we employ VAE with 16-dimensional Gaussian part of latent varible and one categorical latent variable with $K$=10 categories since this datasets both have 10 classes. For this VAE form trained with mutual information maximization, we maximize MI with respect to observations $\mathbf{x}$ and categorical latent variable $\mathbf{c}$. In that case, the mutual information maximization regularizer term has form of
\begin{equation} \label{eq:mut_information_categ}
    \mathit{MI}(\boldsymbol{\theta}, \boldsymbol{\phi}, Q)=
    \mathbb{E}_{\mathbf{c} \sim q_{\boldsymbol{\phi}}(\mathbf{c}|\mathbf{x}), \mathbf{x} \sim  p_{\boldsymbol{\theta}}(\mathbf{x}|\mathbf{z},\mathbf{c})} [\log Q(\mathbf{c}|\mathbf{x})] + H(\mathbf{c}).
\end{equation}
And the full objective for VAE with joint Gaussian and discrete distribution with information maximization becomes
\begin{equation}\label{eq:elbo_joint_latent_final_mi}
\begin{aligned}
    \mathcal{L}(\boldsymbol{\theta},\boldsymbol{\phi},Q)=\mathbb{E}_{q_{\boldsymbol{\phi}}( \mathbf{z},\mathbf{c}|\mathbf{x})} [\log p_{\boldsymbol{\theta}}(\mathbf{x}|\mathbf{z},\mathbf{c})] - D_{\KL}\KLdel{q_{\boldsymbol{\phi}}( \mathbf{z}|\mathbf{x})}{p_{\boldsymbol{\theta}}(\mathbf{z})}-D_{\KL}\KLdel{q_{\boldsymbol{\phi}}( \mathbf{c}|\mathbf{x})}{p_{\boldsymbol{\theta}}(\mathbf{c})} \\ +  \lambda\mathit{MI}(\boldsymbol{\theta}, \boldsymbol{\phi}, Q)
\end{aligned}
\end{equation}

In this setting, we train and compare two identically initialized VAE models with same hyperparameters. One using only objective Eq. \ref{eq:elbo_joint_latent_final} and other with $MI$ regularizer Eq. \ref{eq:mut_information_categ} for mutual information maximization using aforementioned objective Eq. \ref{eq:elbo_joint_latent_final_mi}.
\section{Experimental Results\label{sec:exp_setting}}
\subsection{VAE with Gaussian Latent Variable}
As we mentioned before, we trained two identically initialized VAE models: one using ELBO objective and one with added $MI$ regularizer for sub-part of latent code ($\mathbf{z}_{1}$,$\mathbf{z}_{2}$)=$\mathbf{\hat{z}}$. In Figure \ref{fig:gaussian_manipulations} we provide a qualitative comparison of the impact of these two components of the code on produced samples. For each latent code, we vary $\mathbf{z}_{1}$ and $\mathbf{z}_{2}$ from -3 to 3 with fixed remaining part and decode it. As you can see on Figure \ref{fig:gaussian_manipulations} (a), $\mathbf{\hat{z}}$ in vanilla VAE does not have much impact on the output samples. In contrast, $\mathbf{\hat{z}}$ with maximized mutual information in VAE by $MI$ regularizer have a significant impact on output samples (Figure \ref{fig:gaussian_manipulations} (b)). For this model, we can see that outputs morph between three digit types as the code changes. Moreover, you can see that the particular combinations of these two components of 32-dimensional code morph the original sample into digit 1 and 6 regardless of the original sample type. All of this means that the provided regularizer indeed forces this part of learned latent codes $\mathbf{\hat{z}}$ to have high MI and strong relationship with observations.
\begin{figure}[h]
	\centering
	\begin{subfigure}[h]{0.45\textwidth}
		\includegraphics[width=\textwidth]{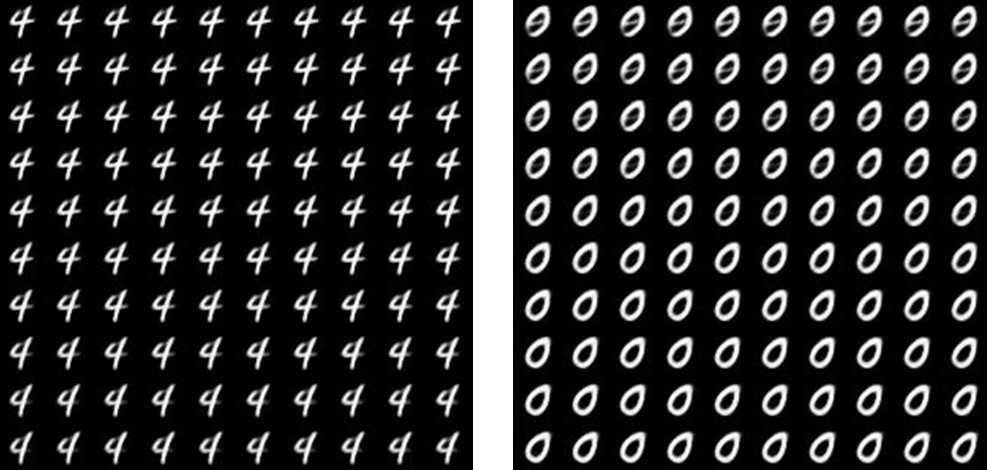}
		\caption{VAE}
	\end{subfigure}
	\begin{subfigure}[h]{0.80\textwidth}
		\includegraphics[width=\textwidth]{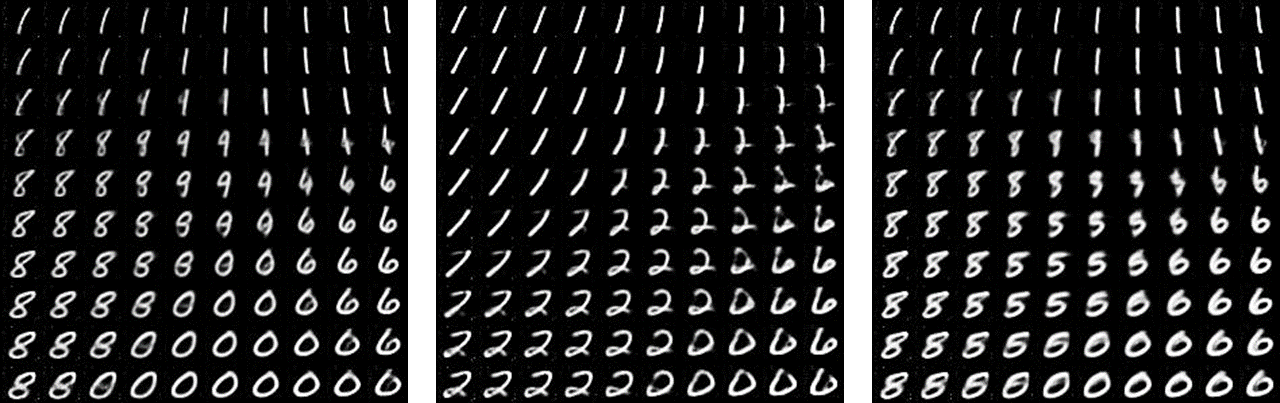}
		\caption{VAE with MI maximization}
	\end{subfigure}
	\caption{Latent code manipulations of samples from VAE with Gaussian latent variable: (a) trained using only ELBO objective, (b) trained with MI maximization. We vary each component of ($\mathbf{z}_{1}$,$\mathbf{z}_{2}$)=$\mathbf{\hat{z}}$ from -3 to 3 having the remaining part of the code fixed. Rows and columns represent values of $\mathbf{z}_{1}$ and $\mathbf{z}_{2}$ respectively}
\label{fig:gaussian_manipulations}
\end{figure}
\begin{figure}[h]
\centering
\includegraphics[width=0.4\linewidth]{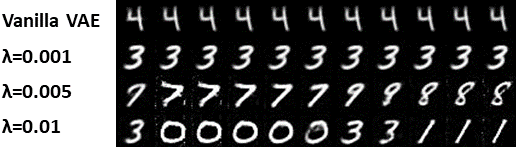}
\caption{Comparison of $\mathbf{z}_{1}$ impact change on the output samples with different values of $MI$ coefficient $\lambda$. The first column represents the original sample and each row represents varied $\mathbf{z}_{1}$ from -5 to 5.}
\label{fig:lambda_impact}
\end{figure}

Also, we compare resulting models with different values of scaling coefficient of $MI$ regularizer $\lambda$ in Figure \ref{fig:lambda_impact}. As you can see, with a low value of lambda, the impact of $\mathbf{z}_{1}$ is the same as in vanilla VAE (trained using only ELBO). However, with the increase of $\lambda$, the impact of $\mathbf{z}_{1}$ on observations also increases. The rest of the latent code is fixed while varying $\mathbf{z}_{1}$.

\subsection{VAE with joint Gaussian and Discrete Latent Variable}
In this section, we compare two identically initialized VAE models with joint Gaussian and discrete latent variables trained on MNIST and FashionMNIST datasets but trained in a different manner. One model is trained using only the ELBO of the form represented by Equation \ref{eq:elbo_joint_latent}. The second one is trained with added $MI$ regularizer (Eq. \ref{eq:elbo_joint_latent_final_mi}) for MI maximization between data samples and categorical part of the learned latent code.

As you can see on Fig.\ref{fig:categorical_manipulations} (a), in VAE that was trained using pure ELBO, the categorical part of the latent code does not have an influence on produced samples. Thus, even when our strong prior assumption that the data have 10 categories was incorporated into the latent variable, the trained model ignores it and does not assign any interpretable representation to the categorical variable.
\begin{figure}[h]
	\centering
	\begin{subfigure}[t]{0.45\textwidth}
		\includegraphics[width=\textwidth]{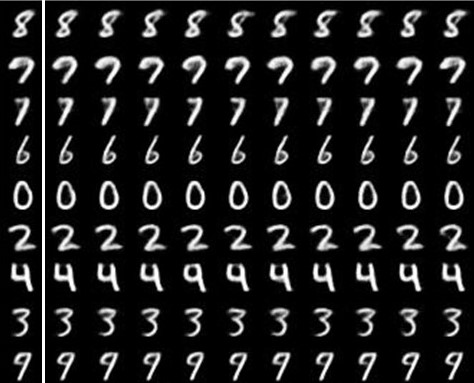}
		\caption{VAE}
	\end{subfigure}
	~
	\begin{subfigure}[t]{0.45\textwidth}
		\includegraphics[width=\textwidth]{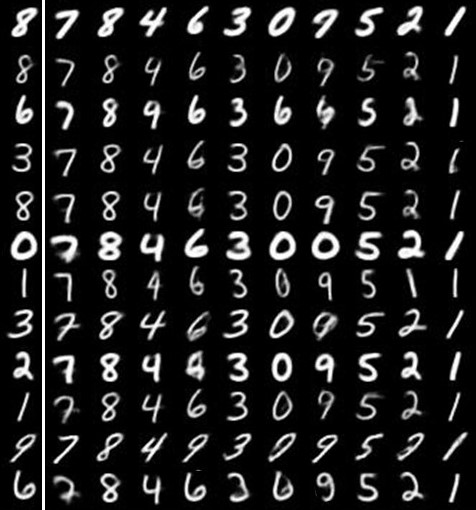}
		\caption{VAE with MI maximization}
	\end{subfigure}
	\caption{Latent code manipulations of samples from VAE with joint Gaussian and discrete latent variabl trained on MNIST with only ELBO objective (a) and with $MI$ regularizer (b). The first (separated) column represents the original samples. The following rows represent this samples with changed categorical part of latent code between 10 categories with fixed Gaussian component. Also, each (not separated) column can be seen as generated samples that are conditioned on a particular category with varied Gaussian component.}
\label{fig:categorical_manipulations}
\end{figure}

In contrast, for VAE that was trained with MI maximization between observations and categorical code, produced samples show a completely different response to the latent categorical variable change. As the categorical part of latent code varies between 10 categories, the samples change in a class-wise manner. For most of the samples, the particular value of the categorical variable changes them to the same digit type while preserving other features of original sample like thickness and angle. We interpret it as that the model generalizes and disentangles digit type from style representations by categorical and Gaussian part of the latent code respectively.

For the sake of quantitative comparison, we applied the encoder categorical component as a classifier to the MNIST classification task. VAE trained with ELBO objective has 21\% classification accuracy on MNIST while VAE trained with $MI$ regularizer achieved 82\% accuracy.

We have performed same experiments on VAE models trained on FashionMNIST dataset and the results are similar to those that was obtained from models trained on MNIST dataset. VAE model trained with pure ELBO objective does not assign almost any representation to the discrete random variable. The model trained with MI maximization for this variable incorporate into it more representation power. With the change of categorical variable the category of sample also changes. You can see comparison of categorical variable change impact on produced samples from both models on Figure \ref{fig:categorical_manipulations_fashion}.

\begin{figure}[h]
	\centering
	\begin{subfigure}[t]{0.45\textwidth}
		\includegraphics[width=\textwidth]{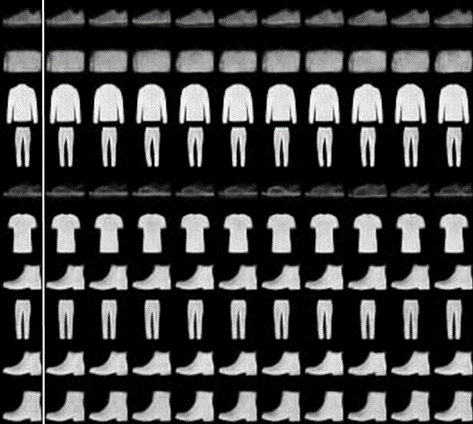}
		\caption{VAE}
	\end{subfigure}
	~
	\begin{subfigure}[t]{0.45\textwidth}
		\includegraphics[width=\textwidth]{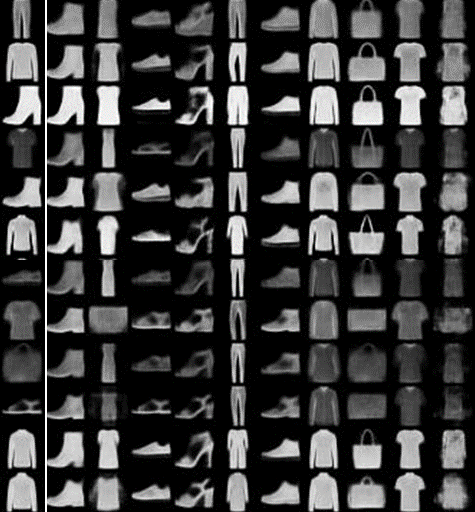}
		\caption{VAE with MI maximization}
	\end{subfigure}
	\caption{Latent code manipulations of samples from VAE with joint Gaussian and discrete latent variabl trained on FashionMNIST with only ELBO objective (a) and with $MI$ regularizer (b). The first (separated) column represents the original samples. The following rows represent this samples with changed categorical part of latent code between 10 categories with fixed Gaussian component. Also, each (not separated) column can be seen as generated samples that are conditioned on a particular category with varied Gaussian component.}
\label{fig:categorical_manipulations_fashion}
\end{figure}

In Fig.\ref{fig:probability_histograms} we compare histograms of categorical latent variable probabilities that were collected during the training of both models. As you can see, for the case of VAE trained using ELBO objective, the probabilities are mostly concentrated around 0.1 and do not reach the area around 1. In the case of VAE with maximized MI for the categorical variable, the probability values are concentrated around 0.5 and 0.9. It is natural behavior since when the category probabilities are uniform regardless of the input, it is pointless to do any further inference using this variable for the decoder network and thus the resulting model ignores it. Therefore, we interpret this observation as that VAE model with maximized MI between data samples and categorical part of the latent variable indeed makes more use of this variable by redistributing category probabilities.
\begin{figure}[h!]
	\centering
	\begin{subfigure}[t]{0.50\textwidth}
		\includegraphics[width=\textwidth]{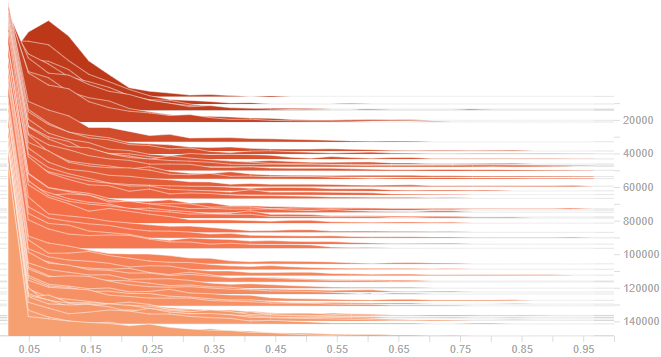}
		\caption{VAE}
	\end{subfigure}
	~
	\begin{subfigure}[t]{0.54\textwidth}
		\includegraphics[width=\textwidth]{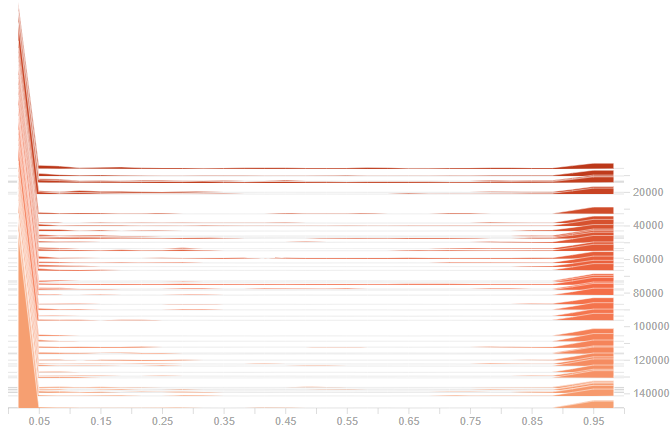}
		\caption{VAE with MI maximization}
	\end{subfigure}
	\caption{Histograms collected from categorical latent variable probabilities of VAE with joint Gaussian and discrete latent variable during process of training.}
\label{fig:probability_histograms}
\end{figure}
\\~\\

As we mentioned before, our proposed Variational Mutual Information Maximization Framework can be used for MI evaluation between latent variables and observations for a fixed VAE by obtaining lower bound on MI (Eq.\ref{eq:mut_information_bound}). In Figure \ref{fig:mi_plots}, we provide plots of lower bound MI estimate between observations and categorical part of latent codes during the training process of two models. One was trained using only the ELBO and the other with MI maximization. As you can see, the VAE model with MI maximization has higher MI lower bound estimate during training than one that was trained without $MI$ regularizer which is an intuitive result.

From the theoretical perspective on KL divergence estimate, value of this estimate is an upper bound on MI between latent variables and observations when taken in expectation over the data \cite{hoffman2016elbo,makhzani2017pixelgan,kim2018disentangling,burgess2018understanding}. Meaning
\begin{equation}\label{eq:elbo_joint_latent_final_mi}
\begin{aligned}
    \mathbb{E}_{p(\mathbf{x})} [D_{\KL}\KLdel{q_{\boldsymbol{\phi}}( \mathbf{z}|\mathbf{x})}{p(\mathbf{z})}]= I(\mathbf{z};\mathbf{x}) + D_{\KL}\KLdel{q(\mathbf{z})}{p(\mathbf{z})} \geq I(\mathbf{z};\mathbf{x}).
\end{aligned}
\end{equation}
Our experimental results are consistent with it. We provide categorical distribution KL-divergence estimate plots that were collected during training of models with and without MI maximization in Figure \ref{fig:kl_plots}. The model with maximized MI between categorical latent variable and observations has higher KL-divergence estimate during training than one that was trained using only ELBO. Moreover, it is close to the maximum possible value of mutual information for this categorical variable which has discrete uniform prior with 10 categories. The maximum possible value is entropy of this uniform distribution which is $H(\mathbf{c}) \approx 2.3$ and our KL-divergence estimate is close to it having 2.27.

\begin{figure}[h!]
\centering
\includegraphics[width=0.6\linewidth]{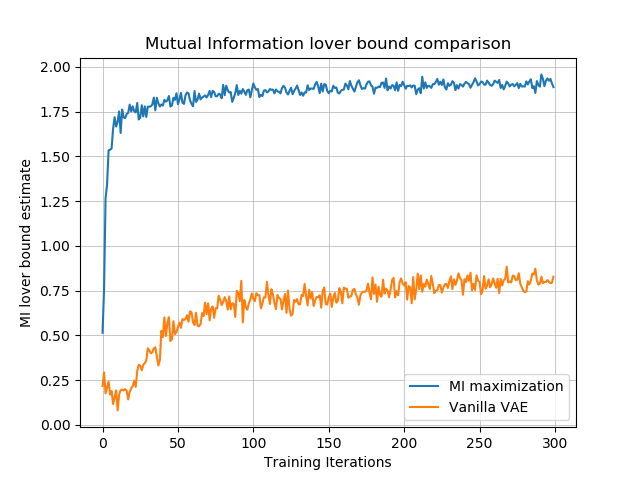}
\caption{MI lower bound estimate for categorical latent variable during the process training VAE models (joint Gaussian and discrete latent) without MI maximization and with $MI$ regularizer.}
\label{fig:mi_plots}
\end{figure}

\begin{figure}[h!]
\centering
\includegraphics[width=0.6\linewidth]{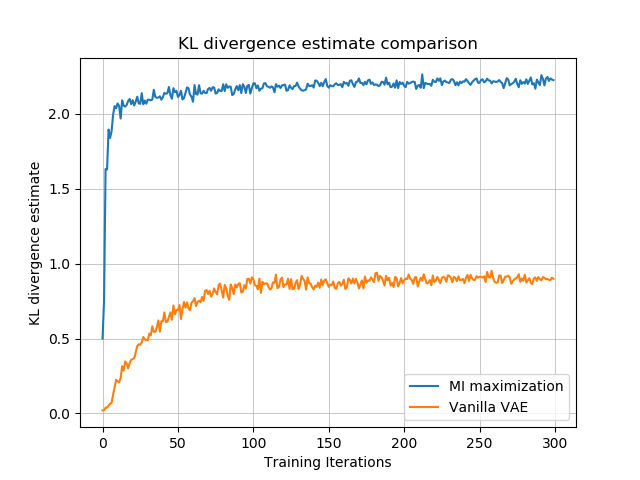}
\caption{KL divergence estimate for categorical latent variable during the process of training VAE models (joint Gaussian and discrete latent) without MI maximization and with $MI$ regularizer.}
\label{fig:kl_plots}
\end{figure}

We have counted numbers of particular digits from MNIST dataset encoded into particular one-hot vectors (categorical variables) for VAE models trained with and without MI maximization. We represent this results in figures \ref{fig:hists_digits_mi} and \ref{fig:hists_digits_nomi}. As you can see, the digit images from particular classes align well with particular one-hot vectors in case of VAE with MI maximization. In contrast, for VAE without MI maximization, the distribution of particular type digit images are almost uniform across all one-hot vectors.
\newpage
\begin{figure}[h]
	\centering
	\begin{subfigure}[t]{0.3\textwidth}
		\includegraphics[width=\textwidth]{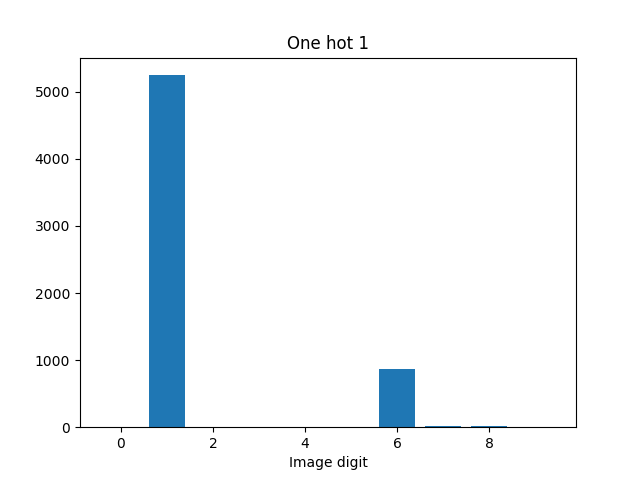}
		\caption{One-hot 1}
	\end{subfigure}
	\begin{subfigure}[t]{0.3\textwidth}
		\includegraphics[width=\textwidth]{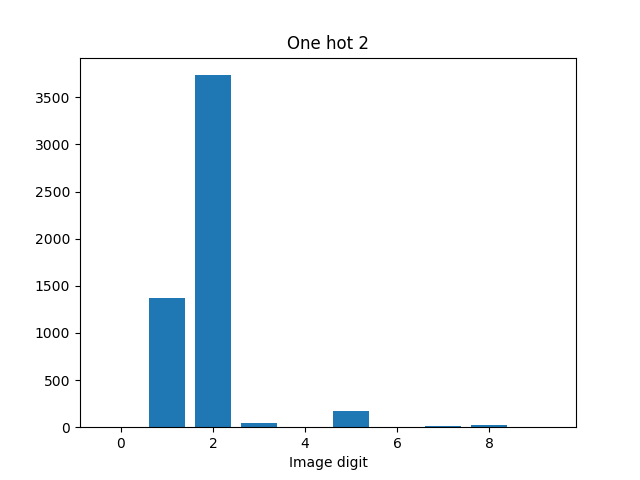}
		\caption{One-hot 2}
	\end{subfigure}
		\begin{subfigure}[t]{0.3\textwidth}
		\includegraphics[width=\textwidth]{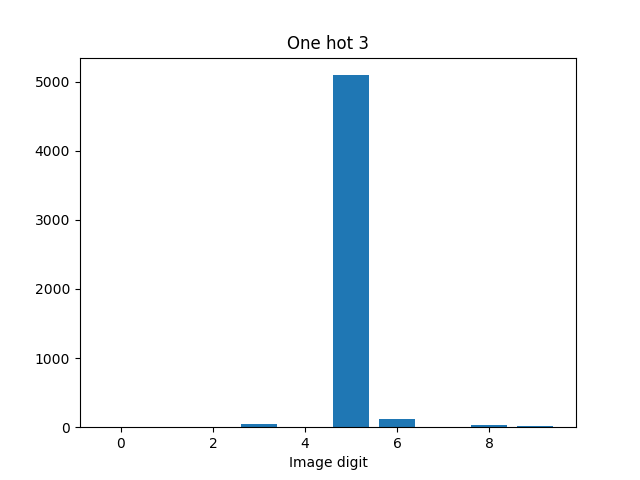}
		\caption{One-hot 3}
	\end{subfigure}
		\begin{subfigure}[t]{0.3\textwidth}
		\includegraphics[width=\textwidth]{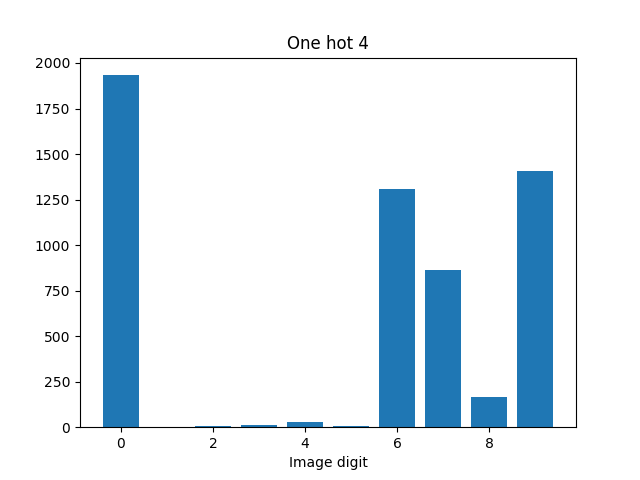}
		\caption{One-hot 4}
	\end{subfigure}
		\begin{subfigure}[t]{0.3\textwidth}
		\includegraphics[width=\textwidth]{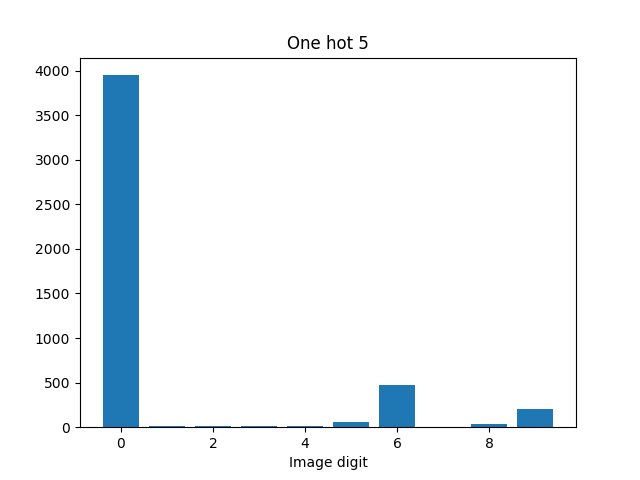}
		\caption{One-hot 5}
	\end{subfigure}
		\begin{subfigure}[t]{0.3\textwidth}
		\includegraphics[width=\textwidth]{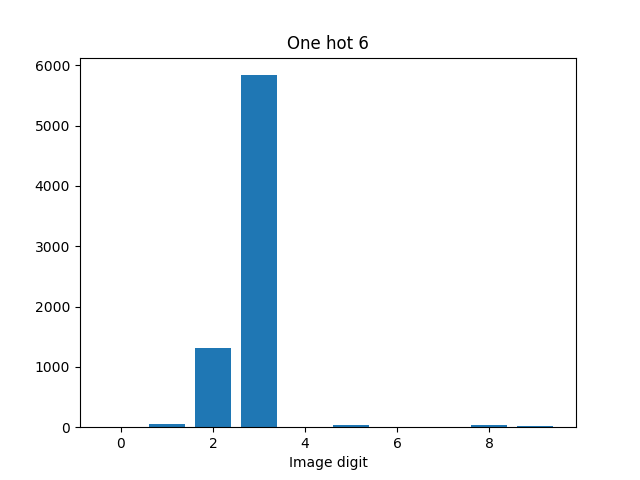}
		\caption{One-hot 6}
	\end{subfigure}
		\begin{subfigure}[t]{0.3\textwidth}
		\includegraphics[width=\textwidth]{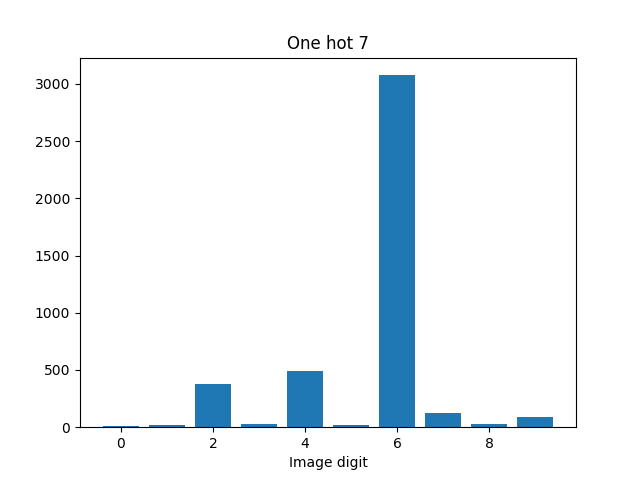}
		\caption{One-hot 7}
	\end{subfigure}
		\begin{subfigure}[t]{0.3\textwidth}
		\includegraphics[width=\textwidth]{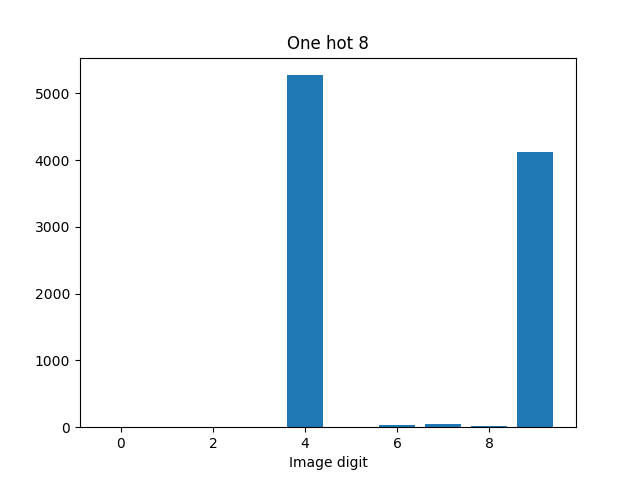}
		\caption{One-hot 8}
	\end{subfigure}
		\begin{subfigure}[t]{0.3\textwidth}
		\includegraphics[width=\textwidth]{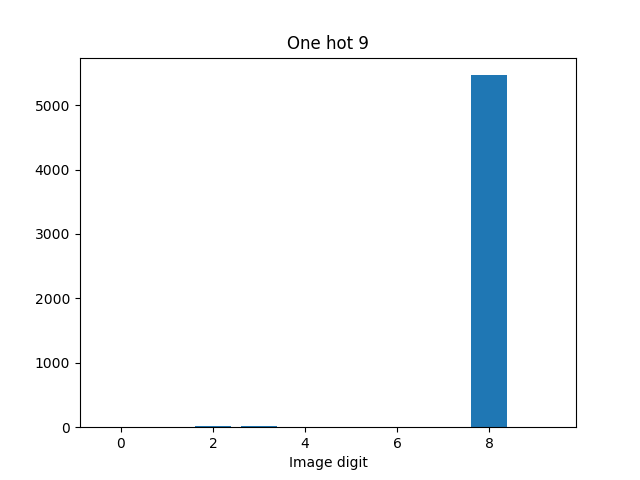}
		\caption{One-hot 9}
	\end{subfigure}
		\begin{subfigure}[t]{0.3\textwidth}
		\includegraphics[width=\textwidth]{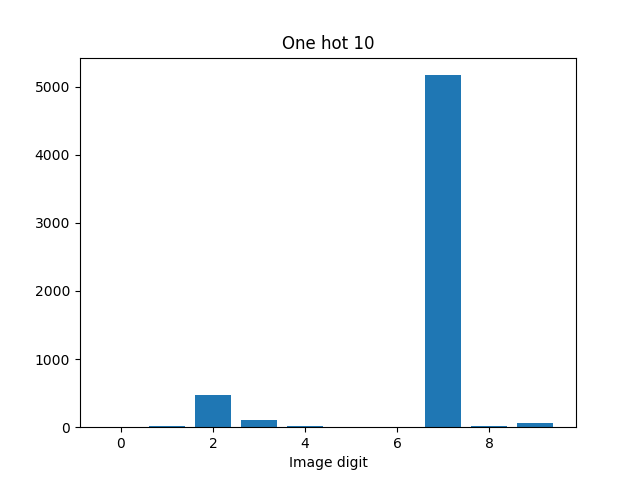}
		\caption{One-hot 10}
	\end{subfigure}
	\caption{Histograms of numbers of digits encoded into one-hot vectors (that represent categorical latent variable). VAE with MI maximization.}
\label{fig:hists_digits_mi}
\end{figure}
\newpage
\begin{figure}[h]
	\centering
	\begin{subfigure}[t]{0.3\textwidth}
		\includegraphics[width=\textwidth]{onehot_histograms_mi/onehot_hist_0.png}
		\caption{One-hot 1}
	\end{subfigure}
	\begin{subfigure}[t]{0.3\textwidth}
		\includegraphics[width=\textwidth]{onehot_histograms_mi/onehot_hist_1.png}
		\caption{One-hot 2}
	\end{subfigure}
	\begin{subfigure}[t]{0.3\textwidth}
		\includegraphics[width=\textwidth]{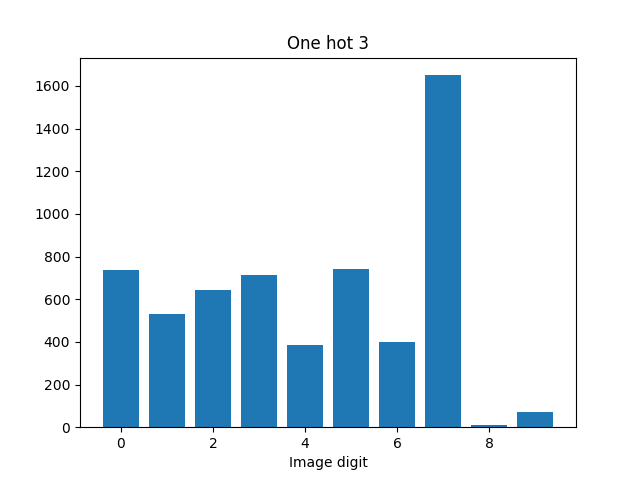}
		\caption{One-hot 3}
	\end{subfigure}
	\begin{subfigure}[t]{0.3\textwidth}
		\includegraphics[width=\textwidth]{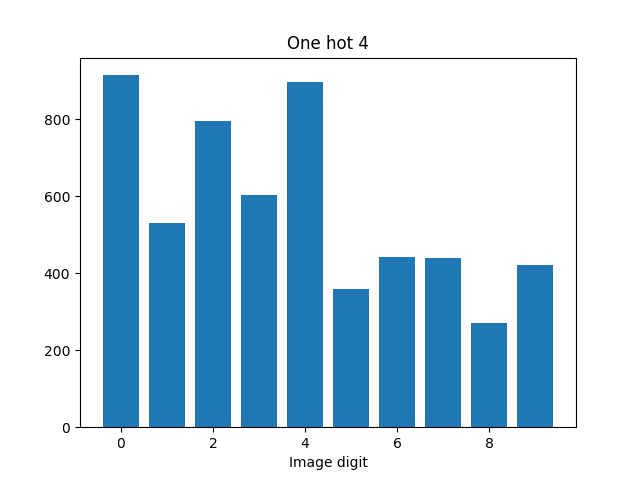}
		\caption{One-hot 4}
	\end{subfigure}
	\begin{subfigure}[t]{0.3\textwidth}
		\includegraphics[width=\textwidth]{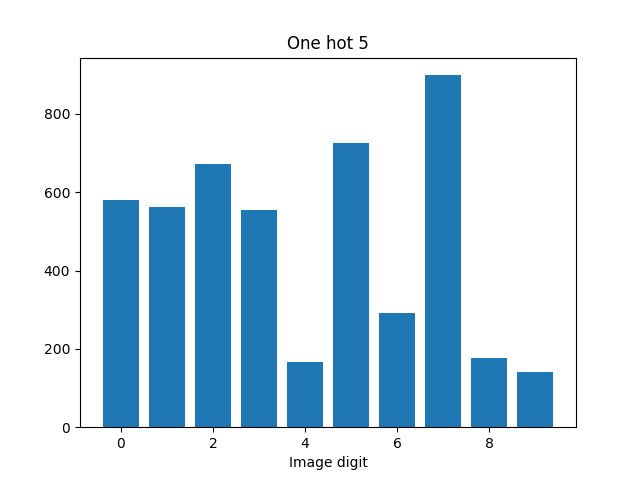}
		\caption{One-hot 5}
	\end{subfigure}
	\begin{subfigure}[t]{0.3\textwidth}
		\includegraphics[width=\textwidth]{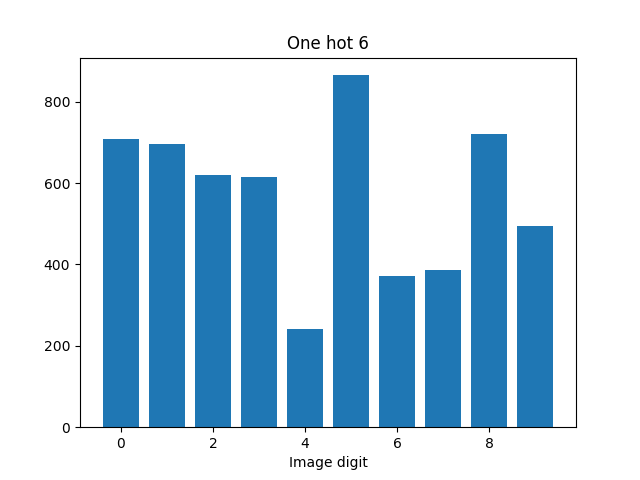}
		\caption{One-hot 6}
	\end{subfigure}
	\begin{subfigure}[t]{0.3\textwidth}
		\includegraphics[width=\textwidth]{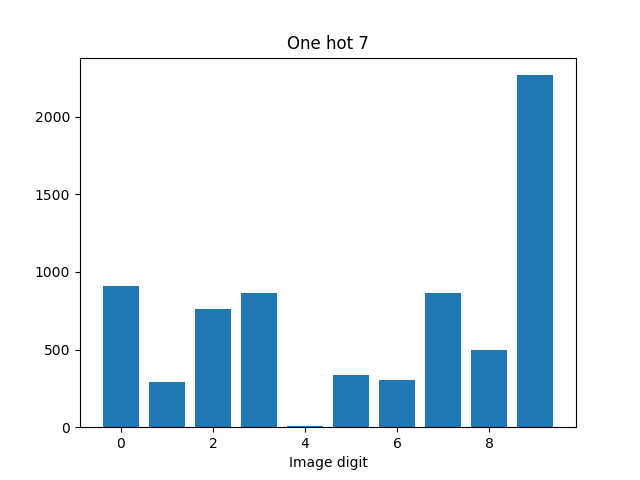}
		\caption{One-hot 7}
	\end{subfigure}
	\begin{subfigure}[t]{0.3\textwidth}
		\includegraphics[width=\textwidth]{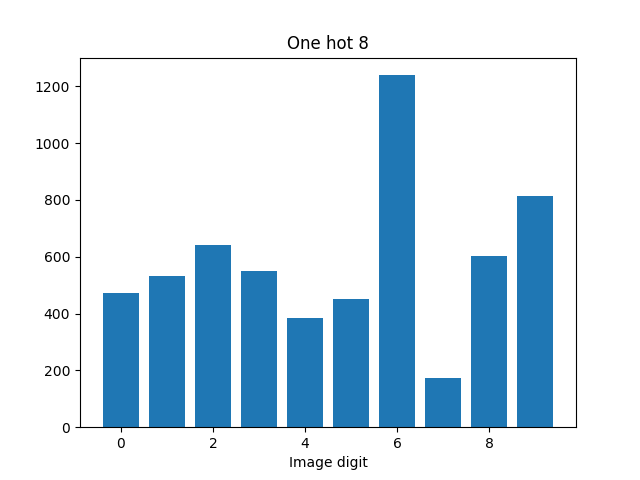}
		\caption{One-hot 8}
	\end{subfigure}
	\begin{subfigure}[t]{0.3\textwidth}
		\includegraphics[width=\textwidth]{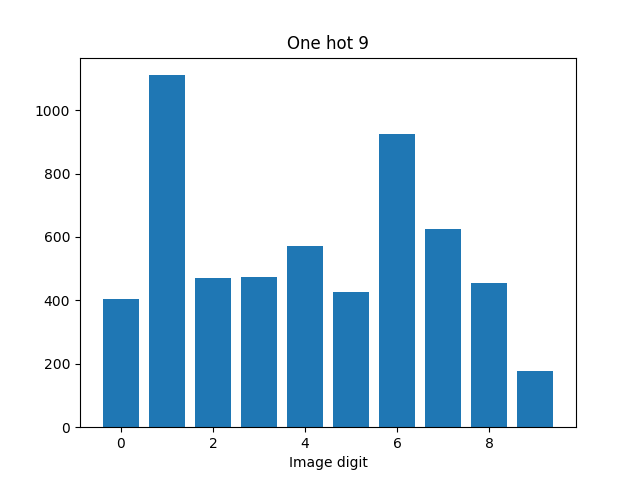}
		\caption{One-hot 9}
	\end{subfigure}
	\begin{subfigure}[t]{0.3\textwidth}
		\includegraphics[width=\textwidth]{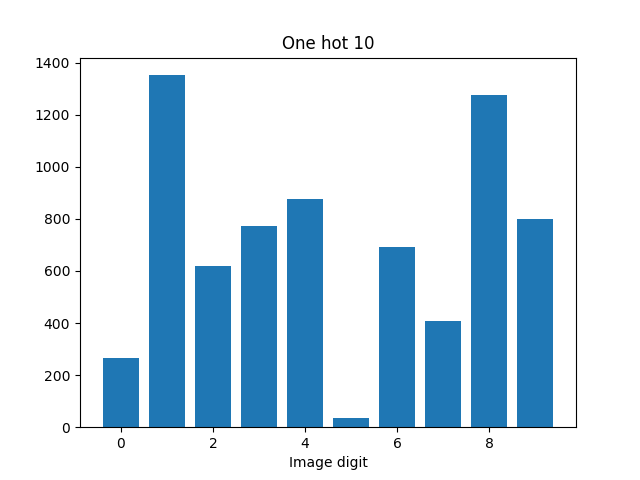}
		\caption{One-hot 10}
	\end{subfigure}
	\caption{Histograms of numbers of digits encoded into one-hot vectors (that represent categorical latent variable). VAE without MI maximization.}
\label{fig:hists_digits_nomi}
\end{figure}

\newpage\section{Intuition\label{sec:intuition}}
The $Q$ network that represents auxiliary distribution can be seen as a classifier network or a feature extractor network. When we maximize $MI$ regularizer
\begin{equation} \label{eq:mut_information_reg_again}
    \mathit{MI}(\theta, \phi, Q) = \mathbb{E}_{\mathbf{z} \sim q_{\boldsymbol{\phi}}(\mathbf{z}|\mathbf{x}), \mathbf{x} \sim  p_{\boldsymbol{\theta}}(\mathbf{x}|\mathbf{z})} [\log Q(\mathbf{z}|\mathbf{x})] + H(\mathbf{z})
\end{equation}
with respect to parameters of this network, the maximization of the first term is the same as minimization of negative log-likelihood as we do when train classification models. Thus, we can interpret the whole training procedure as training the Q network to correctly classify $\mathbf{x}$ in terms of its original generative factors $\mathbf{z}$ or to better extract them from the given sample. Then, when maximizing $MI$ w.r.t. VAE, we are forcing the model to make these features more extractable from the produced samples and more classifiable.
\chapter{Related Works}
\section{FactorVAE\label{sec:factavae}}
In \cite{kim2018disentangling} authors proposed FactorVAE which is a method for learning disentangled representations for VAE \cite{kingma2013auto,rezende2014stochastic} framework on data that is generated from independent variation factors. The key concept of the method is marginal distribution of approximate posterior which authors interpret as distribution of representations of the entire dataset or aggregated posterior \cite{makhzani2015adversarial}. It is defined as follows
\begin{equation}
q(z) = \mathbb{E}_{p_{data}(\mathbf{x})}[q(\mathbf{z}|\mathbf{x})] = \frac{1}{N} \sum_{i=1}^N q(\mathbf{z}|\mathbf{x}^{(i)}).
\end{equation}
Authors argue that as long as we want to vary this factors independently and have disentangled representations, it is desirible to have $q(\mathbf{z}) = \prod_{j=1}^d q(\mathbf{z}_j)$. The stepping stone to the final idea is theoretical insight that KL divergence term in VAE objective can be decomposed \cite{hoffman2016elbo,makhzani2017pixelgan,kim2018disentangling,burgess2018understanding} as
\begin{equation}
\begin{aligned}
    \mathbb{E}_{p_{data}(\mathbf{x})} [D_{\KL}\KLdel{q( \mathbf{z}|\mathbf{x})}{p(\mathbf{z})}]= I(\mathbf{z};\mathbf{x}) + D_{\KL}\KLdel{q(\mathbf{z})}{p(\mathbf{z})}
\end{aligned}
\end{equation}
meaning that penalizing the $D_{\KL}\KLdel{q(\mathbf{z})}{p(\mathbf{z})}$ term improves disentanglment but penalising $I(\mathbf{z};\mathbf{x})$ term will lead for poor quality reconstructions as for high values of $\beta$ in $\beta$-VAE. Thus, authors introduce an additional term for original VAE ELBO objective that will explicitly encourage independence of the latent code components. The resulting objective for FactorVAE is
\begin{equation} \label{eq:factvae_loss}
\frac{1}{N}\sum_{i=1}^N  \Big[ \mathbb{E}_{q(\mathbf{z}|\mathbf{x}^{(i)})}[\log p(\mathbf{x}^{(i)}|\mathbf{z})] - D_{\KL}\KLdel{q(\mathbf{z}|\mathbf{x}^{(i)})}{p(\mathbf{z})} \Big] 
- \gamma D_{\KL}\KLdel{q(\mathbf{z})}{\bar{q}(\mathbf{z})},
\end{equation}
where $\bar{q}(\mathbf{z}) \coloneqq \prod_{j=1}^d q(\mathbf{z}_j)$. The introduced term is referred as Total Correlation \cite{Watanabe:1960:ITA:1661258.1661265} which is a measure of dependence for random variables. Since the introduced term is intractable, authors employ additional discriminator network to perform density-ratio trick \cite{nguyen2010estimating,sugiyama2012density}. Having a discriminator and VAE, authors train them jointly. The discriminator is trained to distinguish between $\bar{q}(\mathbf{z})$ and $q(\mathbf{z})$ and VAE is trained using aforementioned objective.
\section{$\beta$-TCVAE\label{sec:betatcvae}}
In \cite{chen2018isolating} authors use decomposition of ELBO that illustrates the existence of a total correlation term between latent distributions. The used decomposition provides insights to the independence of latent codes, and mutual information between latent codes and observations. The used decomposition of ELBO follows \cite{hoffman2016elbo} and authors associate each observed data sample with integer index and define a uniform random variable on $\{ 1, 2, ..., N\}$. Then, authors define $q(\mathbf{z}|\mathbf{n}) = q(\mathbf{z}|\mathbf{x}_n)$ and $q(\mathbf{z},\mathbf{n}) = q(\mathbf{z}|\mathbf{n})p(\mathbf{n}) = q(\mathbf{z}|\mathbf{n})\frac{1}{N}$. Aggregated posterior is defined as $q(\mathbf{z})=\sum_{n=1}^N q(\mathbf{z}|\mathbf{n})p(\mathbf{n})$ \cite{makhzani2015adversarial}. The resulting decomposition relies on KL divergence term of ELBO objective in expectation of data resulting in
\begin{equation}\label{eq:elbo_kl_decomposition}
\begin{split}
\mathbb{E}_{(\mathbf{x})} [D_{\KL}\KLdel{q( \mathbf{z}|\mathbf{n})}{p(\mathbf{z})}]
= \underbrace{D_{\KL}\KLdel{q( \mathbf{z},\mathbf{n})}{q(\mathbf{z})p(\mathbf{n})}}_{\text{i Index-Code MI}} + \underbrace{D_{\KL}\KLdel{q(\mathbf{z})}{\prod_j q(z_j)}}_{\text{ii Total Correlation}} + \underbrace{\sum_j D_{\KL}\KLdel{q(z_j)}{p(z_j)}}_{\text{iii Dimension-wise KL}}.
\end{split}
\end{equation}
The main hypothesis and reasoning is similar to \cite{kim2018disentangling}. Low total correlation \cite{Watanabe:1960:ITA:1661258.1661265} is a key to disentangled representations which is the reason why $\beta$-VAE is capable of learning them by penalizing general KL-divergence term of original ELBO form. Hovewer, it is also penalizes index-code mutual information which may lead to discarding of the necessary information from the latent codes. Authors emphasize that the penalty on total correlation should force the model to find statistically independent factors in the data distribution. Using this theoretical insights, authors propose a modification to the original $\beta$-VAE called $\beta$-TCVAE with objective equivalent to FactorVAE \cite{kim2018disentangling} but optimized in a different manner. The resulting objective of $\beta$-TCVAE is
\begin{align} \label{eq:beta-tc-vae}
\mathcal{L}_{\beta-\text{TC}} :=
\mathbb{E}_{q(\mathbf{z}|\mathbf{n})p(\mathbf{n})}[\log p(\mathbf{n}|\mathbf{z})] - \alpha I_q(\mathbf{z};\mathbf{n}) 
- \beta D_{\KL}\KLdel{q(\mathbf{z})}{\prod_j q(z_j)} -
\gamma\sum_j D_{\KL}\KLdel{q(z_j)}{p(z_j)}
\end{align}
In the experiments, authors discovered that fixing $\alpha=\gamma=1$ and varying $\beta$ leads to better results. In this setting, this objective is same as in FactorVAE when $\gamma=\beta - 1$. The difference is that Kim $\&$ Mnih \cite{kim2018disentangling} estimate total correlation using additional discriminator network. In contrast, authors of $\beta$-TCVAE use minibatch-weighted sampling.
\section{DIP-VAE\label{sec:dipvae}}
In \cite{kumar2017variational} authors propose to add a regularizer to the VAE ELBO objective that will encourage inferred prior or expected variational posterior $
q_{\phi}(\mathbf{z}) = \int q_\phi(\mathbf{z}|\mathbf{x}) p(\mathbf{x}) d\mathbf{x}$ to match the true prior $p(\mathbf{z})$. Authors argue that minimizing KL divergence between this two distributions or any other divergence metric will lead to better and disentangled learned latent representations. This is so since the gap between $D_{\KL}\KLdel{q_{\phi}(\mathbf{z})}{p(\mathbf{z})}$ and $\mathbb{E}_{p(\mathbf{x})}D_{\KL}\KLdel{q_{\phi}(\mathbf{z}|\mathbf{x})}{p_{\theta}(\mathbf{z}|\mathbf{x})}$ may be large when training VAE model reaches some stationary point. The resulting objective of proposed DIP-VAE  (for Disentangled Inferred Prior)  is
\begin{equation}
\max_{\theta,\phi} \ \mathbb{E}_{p(\mathbf{x})} \big[\mathbb{E}_{q_{\boldsymbol{\phi}}( \mathbf{z}|\mathbf{x})} [\log p_{\theta}(\mathbf{x}|\mathbf{z})] - D_{\KL}\KLdel{q_{\phi}( \mathbf{z}|\mathbf{x})}{p(\mathbf{z})} \big] - \lambda D(q_{\phi}(\mathbf{z}) \ \vert\vert \ p(\mathbf{z})),
\end{equation}
where $D$ is an arbitrary distance metric and $\lambda$ controls impact of the added regularizer to the full objective. It is intractable to compute this distance in case of KL-divergence, thus authors employ matching moments technique in their experiments.
\section{Adversarial Autoencoders\label{sec:adv_autoencoders}}
Adversarial Autoencoders \cite{makhzani2015adversarial} introduce an alternative approach for learning directed generative latent variable models. The adversarial autoencoder is an autoencoder that is regularized by matching the the aggregated posterior to an arbitrary prior distribution. In the proposed framework the adversarial network discriminates between latent codes from the approximate posterior distribution and selected arbitrary prior distribution. It guides approximate posterior distribution to match the prior distribution. The encoder tries to full the discriminator network by aggregated posterior distribution that it is true prior distribution.

Adversarial part and the autoencoder are trained jointly in turns: reconstruction phase and the regularization phase. During reconstruction phase, the autoencoder minimizes the reconstruction error. During regularization phase, the adversarial part first updates discriminator to distinguish samples from prior and posterior. After that, the generator (encoder) is updated to full discriminator to bring its distribution closer to the prior.
\section{Wasserstein Autoencoders\label{sec:wass_autoencoders}}
Wasserstein Autoencoders (WAE) proposed in \cite{tolstikhin2017wasserstein} take a different perspective on training latent variable models with insights arising from Optimal Transport theory. In contrast to conventional VAE, WAE aims to minimize any transport distance between unknown true data distribution $P_X$ and model distribution $P_G$ using insights from \cite{bousquet1705optimal}. Having any metric function between two images $c(x,x^{'})$, WAE minimize the objective of the form
\begin{equation}
\min_{Q(Z|X)} \ \mathbb{E}_{P_X} \ \mathbb{E}_{Q(Z|X)} \big[ c(X,G(Z)\big] - \lambda D_Z(Q_Z,P_Z),
\end{equation}
with respect to the parameters of decoder $P_G(X|Z)$.
In the used objective, $Q(Z|X)$ is encoder, $Q_Z(Z)$ is aggregated posterior distribution, $D_Z$ is any divergence metric between distribution over random variable $Z$, and $\lambda$ is positive regularization coefficient. The encoder $Q(Z|X)$ and decoder $G(Z)$ are represented by neural networks in the proposed framework. The proposed objective is similar to the ELBO objective of VAE since the first term is reconstruction term that aligns the encoder-decoder pair that encoded images will be accurately decoded. The second term is matching aggregated posterior $Q_Z$ to the prior distribution $P_Z$ in comparison to the VAE where the point-wise posteriors $Q(Z|X)$ is encouraged to match prior for all observed data points. Authors state that WAE provides explicit control over the shape of the entire encoded dataset distribution while VAE aims to control each point separately. Autors also show that the proposed framework generalizes adversarial autoencoders \cite{makhzani2015adversarial}.

For WAE authors propose to use two different kinds of regularizer $D_Z$: GAN-based and MMD-based. GAN-based option employs Jensen-Shannon divergence and uses adversarial training to estimate it. MMD-based regularizer employs maximum mean discrepancy and it generalizes the proposed InfoVAE model in \cite{zhao2017infovae}.
\section{Discussion\label{sec:rel_discussion}}
In this chapter were covered all major recently published works that are related to improvement of original VAE framework and learned latent representations. FactorVAE (Section \ref{sec:factavae}) and $\beta$-TCVAE (Section \ref{sec:betatcvae}) encourage disentanglement of learned representations by increasing penalty of total correlation term between aggregated posterior and factorial latent distribution. In DIP-VAE (Section \ref{sec:dipvae}) the model employs additional regularizer term to match aggregated posterior and priror distribution by arbitrary distance metric. In this works, mutual information component of the decomposed KL-divergence term is not penalized or controlled in any way. Adversarial Autoencoders and Wasserstein Autoencoders basically introduce alternative way for measuring discrepancy between prior and approximate posterior distribution in autoencoder setting.

In contrast, in our work we aim to increase mutual information between latent codes and observations using explicit regularizer based on variational mutual information lower bound estimate. In the number of works such as \cite{higgins2017beta} and \cite{chen2016infogan} it was argued that increasing mutual information can also lead to better learned representations as we also show in our work. On top of that, the disentanglement of learned latent representations is not the only purpose of our proposed framework. The other purpose of the proposed mutual information regularizer is to force resulting VAE model to not ignore the latent codes and strengthen relationship between them and observations. This issue is not discussed and addressed in any of the works covered in this chapter. 
\chapter{Conclusion}
In our work, we have presented a method for evaluation, control, and maximization of mutual information between latent variables and observations in VAE using variational lower bounding technique on mutual information. In comparison to other related works, it provides an explicit and tractable objective without interferring with original ELBO objective. Using the proposed technique, it is possible to compare MI for different fixed VAE networks. Moreover, our experimental results illustrate that the Variational Mutual Information Maximization Framework can indeed strengthen the relationship between latent variables and observations. Also, it improves learned representations for tasks when original ELBO ojective fail to learn useful representations by latent variable. However, it comes with an increase in computational and memory cost, since mutual information lower bound estimate requires auxiliary distribution $Q$ that we represent by an additional encoder neural network and train. 

We believe, that our work (with further analysis and improvements) have the potential to fill the gaps between previous theoretical insights for VAE from Information Theory perspective since ours empirical results are consistent with them. For instance, KL-divergence term in VAE, by analysis from \cite{hoffman2016elbo,makhzani2017pixelgan,kim2018disentangling,burgess2018understanding}, is an upper bound on true mutual information between latent codes and observations. Our empirical results are consistent with this insights.

With regard to increased computational cost, our method can be potentially combined with various models. For instance, there are number of approaches that combine VAE and GAN and our auxiliary network $Q$ can be applied as small sub-branch of the generator network (in similar manner as in \cite{chen2016infogan}) that will not increase computational and memory cost dramatically but may lead to better quality of learned by model representations. Also, it may lead to faster convergence of the model which we see as potential future research direction.

Finally, one of the future research directions is to use encoder network to represent auxiliary distribution $Q$ as well as approximate posterior simultaneously.

\bibliographystyle{abbrv}
\bibliography{./main}


\acknowledgment[4]
    
    I am thankful to my advisor Professor Dae-Shik Kim for supporting and advising me during these past two years. I would like to express my deepest gratitude to my family for their love and support throughout the path. Without them I could not have done it. Finally, I thank my colleagues and KAIST professors for the support and knowledge that I have gained about various fields as well as the craft of research. I especially would like to thank Sun Mi Park and Chihye Han for being helpful and supportive when I needed it. 
    
    \textit{I dedicate this thesis to my mother. No words can express how grateful and sorry I am for all of the sacrifices that she has made and sufferings she has been through because of me. Thank you for everything and for your unconditional love that I do not deserve.}

\label{paperlastpagelabel}

\end{document}